\theoremstyle{definition}
\theoremstyle{remark}
\theoremstyle{plain}
\newtheorem{fact}{Result}
\theoremstyle{remark} \newtheorem{note}{Remark}
\DeclareMathOperator{\E}{E}
\DeclareMathOperator{\Cov}{Cov}
\DeclareMathOperator{\Var}{Var}
\DeclareMathOperator{\tr}{tr}
\DeclareMathOperator*{\argmin}{arg\,min}
\DeclareMathOperator{\col}{col}
\DeclareMathOperator{\diag}{diag}
\newcommand{\mbf}[1]{\mathbf{#1}}
\newcommand{\mbs}[1]{\boldsymbol{#1}}
\newcommand{\what}[1]{\widehat{#1}}
\newcommand{\wtilde}[1]{\widetilde{#1}}
\newcommand{\wbar}[1]{\overline{#1}}
\newcommand{\dataset}{\mathcal{D}}
\newcommand{\risk}{\mathcal{R}}
\newcommand{\eps}{\varepsilon}
\newcommand{\Ehat}{\what{\text{E}}}
\newcommand{\erisk}{R}
\newcommand{\epsvec}{\mbs{\varepsilon}}
\newcommand{\X}{\mathbf{X}}
\newcommand{\U}{\mathbf{U}}
\newcommand{\Lam}{\mbs{\Theta}}
\newcommand{\regcM}{\mbs{\Psi}}
\newcommand{\regM}{\mbs{\Phi}}
\newcommand{\ycov}{\mbs{\Sigma}}
\newcommand{\ycross}{\mathbf{r}}
\newcommand{\scov}{\mathbf{Z}}
\newcommand{\y}{\mathbf{y}}
\newcommand{\covbasis}{\psi}
\newcommand{\regc}{\mbs{\psi}}
\newcommand{\reg}{\mbs{\phi}}
\newcommand{\regcol}{\widetilde{\mbs{\phi}}}
\newcommand{\ux}{\mathbf{u}}
\newcommand{\x}{\mathbf{x}}
\newcommand{\z}{\mathbf{z}}
\newcommand{\nusn}{\coeff_0}
\newcommand{\lin}{\mbs{\lambda}}
\newcommand{\lag}{\mbs{\kappa}}
\newcommand{\proj}{\mbs{\Pi}}
\newcommand{\covp}{\mbs{\theta}}
\newcommand{\covpsc}{\theta}
\newcommand{\T}{\top}
\newcommand{\coeff}{\mbf{w}}
\newcommand{\weight}{\mbs{\varphi}}
\newcommand{\weightsc}{\varphi}
\newcommand{\weightM}{\mbf{D}}
\newcommand{\noteend}{$\Diamond$}
\newcommand{\I}{\mbf{I}}
\newcommand{\0}{\mbf{0}}
\newcommand{\LR}{\textsc{Lr}}
\newcommand{\LC}{\textsc{Lc}}
\begin{document}


\title{\textbf{Online Learning for Distribution-Free Prediction}}
\author{Dave Zachariah, Petre Stoica and Thomas B. Schön\thanks{This
    work has been partly supported by the Swedish Research Council
    (VR) under contracts 621-2014-5874, 621-2013-5524 and 2016-06079.}}
\date{} 
\maketitle

\begin{abstract}
We develop an online learning method for prediction, which is
important in problems with large and/or streaming data
sets. We formulate the learning approach using a covariance-fitting
methodology, and show that the resulting predictor has desirable
computational and distribution-free properties: It is implemented
online with a runtime that scales
linearly in the number of samples; has a constant memory requirement; avoids
local minima problems; and prunes away redundant feature dimensions without relying on restrictive
assumptions on the data distribution. In conjunction with the split
conformal approach, it also produces distribution-free prediction confidence
intervals in a computationally efficient manner. The method is demonstrated on both real and synthetic datasets.
\end{abstract}

\section{Introduction}

Prediction is a classical problem in statistics, signal processing,
system identification and machine learning \cite{HastieEtAl2009_elements,Cressie&Wikle2011_spatiotemporal,Soderstrom&Stoica1988_system,Murphy2012_machine}. The
prediction of stochastic processes was pioneered in the temporal
and spatial domains, cf.
\cite{Wold1938_study,Kolmogorov1941_interpolation,Wiener1949_extrapolation,Cressie1990_origins},
but the fundamental ideas were generalized to arbitrary domains. In general,
the problem can be formulated as predicting the output
of a process, $y$, for a given input test point $\x$ after observing a
dataset of input-output pairs
$$\dataset = \bigl \{ (\x_1, y_1), \: \dots, \: (\x_n, y_n)  \bigr\}.$$

In this paper, we are interested in learning predictor functions
$\what{y}(\x)$ in scenarios where $n$ is very large or is
increasing. We consider two common classes of predictors:
\begin{enumerate}[1)]
\item The first class consists of predictors in \emph{linear
    regression} (\LR) form. That is, given
a regressor function $\reg(\x)$, the predictor $\what{y}(\x)$ is expressed as a linear combination of its elements. This class includes ridge
regression, \textsc{Lasso}, and elastic net
\cite{Hoerl&Kennard1970_ridge,Tibshirani1996_regression,Zou&Hastie2005_regularization}. The
regressor $\reg(\x)$ can be understood as a set of input features and
a standard choice for it is simply $\x$.

\item The second class comprises predictors in \emph{linear combiner} (\LC) form. That is, the
predictor $\what{y}(\x)$ is expressed as a linear combination of all
observed samples $\{ y_i \}$, using a model of the process. This class includes kernel smoothing,
Gaussian process regression, and local polynomial regression
\cite{Nadaraya1964_estimating,Watson1964smooth,DruckerEtAl_1997support,Williams&Rasmussen1996_gaussian,Cleveland1981_lowess}. A
standard model choice is a squared-exponential kernel or covariance
function for $y$.
\end{enumerate}
The weights in the \LR{} and \LC{} classes are denoted $\coeff$ and
$\lin$, respectively. For a  given prediction method in either class,
the weights are functions of the data $\dataset$.  The set
of possible weights is large and must be constrained in a balanced
manner to avoid a high variance and bias of $\what{y}(\x)$, which
correspond to overfitting and underfitting, respectively.

In most prediction methods, the constraints on the set of weights are
controlled using a hyperparameter, which we denote $\covp$ and which is
learned from data. There exist several methods for doing so,
including cross-validation, maximum likelihood, and weighted least-squares
\cite{HastieEtAl2009_elements,MacKay2003_information,Cressie1985_fitting}. These
learning methods are, however, nonconvex and not readily
scalable to large $n$, which is the target case of this paper. Moreover,
in the process of quantifying the prediction uncertainty, using
e.g. the bootstrap \cite{Efron&Tibshirani1986_bootstrap} or conformal approaches
\cite{VovkEtAl2005_algorithmic,LeiEtAl2016_distribution}, the learning methods compound the
complexity and render such uncertainty quantification intractable
for large $n$.

In this paper, we consider a class of predictors $\what{y}(\x ;
\covp)$ that can equivalently be expressed in 
either linear regression or combiner form. In this class, $\covp$ constrains
the weights for each dimension of $\reg(\x)$
individually. Thus irrelevant features can be suppressed and
overfitting mitigated
\cite{Neal1996_bayesian,Tipping2001_sblrvm,Murphy2012_machine}. To learn the hyperparameters
$\covp$, we employ a covariance-fitting
methodology \cite{Cressie1985_fitting,Anderson1989_linear,OtterstenEtAl1998_covariance} by generalizing a fitting criterion used in
\cite{StoicaEtAl2011_newspectral,StoicaEtAl2011_spice,Zachariah&Stoica2015_onlinespice}. We
extend this learning approach to a predictive
setting, which results in a predictor with the following attributes:
\begin{itemize}
\item computable online in linear runtime,
\item implementable with constant memory,
\item does not suffer from local minima problems,
\item enables tractable distribution-free confidence intervals,
\item prunes away irrelevant feature dimensions.
\end{itemize}
These facts render the predictor particularly suitable for scenarios with
large and/or growing number of data points. It can be viewed as an
online, distribution-free alternative to the automatic relevance
determination approach
\cite{Neal1996_bayesian,Tipping2001_sblrvm,Murphy2012_machine}. Our contributions include
generalizing the covariance-fitting methodology to non-zero mean
structures, providing connections between linear regression and combiner-type
predictors, and an analysis of prediction performance when the
distributional form of the data is unknown.

The remainder of the paper is organized as follows. In
Section~\ref{sec:background}, we introduce the problem of learning
hyperparameters. In Section~\ref{sec:distributionfree}, we highlight desirable
constraints on the weights in an \LR{} setting, whereas
Section~\ref{sec:modelbased} highlights these constraints in an \LC{}
setting. The hyperparameters constrain the weights in different ways
in each setting. In Section~\ref{sec:covariance}, 
the covariance-fitting based learning method is introduced and applied
to an \LC{} predictor. The resulting computational and
distribution-free properties are derived. Finally, in
Section~\ref{sec:experiments}, the proposed online learning approach
is compared with the offline cross-validation approach on a series of
real and synthetic datasets.

\begin{note}
In the interest of reproducible research, we have made the code for
the proposed method available at
\url{https://github.com/dzachariah/online-learning}. \noteend
\end{note}

\emph{Notation:} $\odot$ is the elementwise Hadamard product. The operation $\col\{ \mbf{x}_1,
\dots, \mbf{x}_n \}$ stacks all $\x_i$ into a single column vector, while
$[\mbf{X}]_i$ denotes the $i$th column of $\mbf{X}$. The sample mean
is written as $\Ehat[\x_i] = n^{-1}\sum^{n}_{i=1} \x_i$. The number of
nonzero elements in a vector is denoted as $\| \x \|_0$. The Kronecker
delta is denoted $\delta(\x,\y)$. 

\emph{Abbreviations:} Independent and identically distributed (i.i.d.).
 
\section{Learning problem}
\label{sec:background}

Let $\what{y}(\x ; \covp)$ denote a predictor with hyperparameters
$\covp$. For linear regression and combiner-type predictors, the choice of
$\covp$ constrains the set of weights. 
For any input-output pair $(\x, y)$, the risk of the predictor
is taken to be the mean-squared error \cite{Wasserman2004_allofstats}
\begin{equation}
\risk \triangleq \E \left[ \left| y - \what{y}(\x ; \covp) \right|^2
\right].
\label{eq:risk}
\end{equation}

The optimal choice of $\covp$ is therefore the hyperparameter that
minimizes the unknown risk. A common distribution-free learning
approach  is
cross-validation, which
estimates the risk for a fixed choice of $\covp$. The risk estimate is formed by first dividing the training data into $K$ subsets and then predicting the output in one
subset using data from the remaining $K-1$ subsets\cite[ch.~7]{HastieEtAl2009_elements}:
\begin{equation}
\what{\risk}(\covp)\: = \: \sum^K_{k=1}\frac{n_k}{n} \: \Ehat\left[ \left| y_i -
  \what{y}_{\neg k}(\x_i; \covp) \right|^2 \right],
\label{eq:learning_kfoldcv}
\end{equation}
where $n_k$ is the number of samples in subset $k$ and $\what{y}_{\neg k}(\x; \covp)$ denotes the predictor using
training data from all subsets except $k$. In this approach, the hyperparameter is learned by finding $\what{\covp}$ that
minimizes $\what{\risk}(\covp)$. For one-dimensional
hyperparameters, a numerical search for the minimum of
\eqref{eq:learning_kfoldcv} is feasible when $K$ is
small; otherwise it may well be impractical.

An alternative learning approach is to assume that the distribution of $(\x, y)$
belongs to a family that is parameterized by $\covp$. Then
the hyperparameter can be learned by fitting a covariance
model of the data to the empirical moments of the training data, 
cf. \cite{Cressie1985_fitting}. If additional assumptions are made
about the distributional form, it is possible to formulate a complete
probabilistic model of the data and learn $\covp$ using the asymptotically efficient maximum likelihood approach,
cf. \cite{Stein1999_interpolation,MacKay2003_information}. 

The aforementioned statistical learning methods are, however, in
general nonconvex and may therefore give rise to multiple minima problems
\cite{Mardia&Watkins1989_multimodality,Zimmermann2011_asymptotic}. This
becomes problematic when $\covp$ is multidimensional as the methods may require
a careful choice of initialization and numerical search techniques
\cite[ch.~5]{Rasmussen&Williams2006_gaussian}. In addition, the distributional assumptions employed in the maximum likelihood approach may lack robustness to model misspecifications \cite{Carroll&Ruppert1982_comparison}.

More importantly for the data scenarios considered in this paper,
these learning methods cannot readily be implemented online. That is, for a
given dataset, the computational complexity of learning $\covp$
does not scale well with $n$ and the process must be repeated each time $n$
increases. Consequently, for each new
$\what{\covp}$,  the predictor function $\what{y}(\x ; \what{\covp})$ must be computed
afresh.

Our main goal is to develop a learning approach that
is implementable online, obviates local minima problems, and has desirable
distribution-free properties. Before we introduce the proposed
learning approach, we introduce the hyperparameters in the context of
\LR{} and \LC{} predictors, respectively.

\section{Linear regression predictor}
\label{sec:distributionfree}

\LR{} predictors are written in the following form
\begin{equation}
\boxed{\what{y}(\x) = \reg^\T(\x) \coeff,}
\label{eq:generallinpredictor}
\end{equation}
where $\reg(\x)$ is a $p$-dimensional regression function and $\coeff$
are weights. The empirical risk of a predictor can then be written as 
\begin{equation}
\erisk(\coeff) \triangleq \Ehat\left[|y_i - \reg^\T(\x_i) \coeff|^2 \right].
\label{eq:risk_empirical}
\end{equation}
When the empirical risk is minimized under a constraint on $\coeff$,
irrelevant dimensions of $\reg(\x)$ can be suppressed which mitigates overfitting 
\cite{HastieEtAl2009_elements,LazerEtAl2014_Google_flu}. For
notational convenience we write the regressor matrix as
\begin{equation*}
\regM \triangleq \begin{bmatrix}
\reg^\T(\x_1) \\ \vdots \\ \reg^\T(\x_n)
\end{bmatrix} = \begin{bmatrix}
\regcol_1 & \cdots & \regcol_p
\end{bmatrix},
\end{equation*}
where $\regcol_j$ denotes the $j$th column.

When $p$ is large, we would ideally like to find a subset of at most $k \ll p $
relevant dimensions
\cite{HastieEtAl2009_elements,WestonEtAl2003_zeronorm,Foster&George1994_risk,HastieEtAl2015_learningsparse}. We
may then define the optimal weights as
\begin{equation}
\begin{split}
\coeff_\star \: &\triangleq \: \argmin_{\coeff \: : \: \| \coeff  \|_0
  \leq k} \; \erisk(\coeff) .
\label{eq:coeffmin_sparse}
\end{split}
\end{equation}
Let $\what{y}_\star(\x) = \reg^\T(\x)
\coeff_\star$ denote corresponding optimal \LR{} 
predictor. This sparse predictor does not rely on any
assumptions on the data, and is therefore robust to
model misspecifications. We collect the prediction errors of the
training data in the vector
$$\epsvec = \col\{ \eps_1, \dots, \eps_n \},$$
where  $\eps_i = y_i - \what{y}_\star(\x_i)$. In the subsequent analysis, we
also make use of 
\begin{equation}
\eps_\star \; \triangleq \; \max_{1 \leq j \leq p} \; \left| \epsvec^\T \regcol_j
\right |.
\label{eq:eps_constant}
\end{equation}

Solving \eqref{eq:coeffmin_sparse}
is a computationally intractable problem,
even for moderately sized $p$. Thus $\what{y}_\star(\x)$ will be taken as a
reference against which any alternative $\what{y}(\x)$ can be compared
by defining the prediction divergence \cite{Buhlmann&VanDeGeer2011_highdim}:
\begin{equation}
\begin{split}
\Delta_\star \: \triangleq \: \Ehat\left[  \bigl| \what{y}(\x_i) -
  \what{y}_\star(\x_i)  \bigr|^2 \right].
\end{split}
\end{equation}

A tractable convex relaxation of the problem in
\eqref{eq:coeffmin_sparse} is the $\ell_1$-regularized
\textsc{Lasso} method
\cite{Tibshirani1996_regression}, as defined by
\begin{equation}
\begin{split}
\what{\coeff} \: &= \: \argmin_{\coeff} \;
\erisk(\coeff) + \covpsc\| \coeff \|_1,
\label{eq:coeffmin_lasso}
\end{split}
\end{equation}
where $\covpsc \geq 0$ is a hyperparameter that shrinks all weights
toward zero in a sparsifying fashion. This \LR{} predictor
has desirable risk-minimizing properties and can be computed online
in linear runtime for any fixed $\covpsc$
\cite{Greenshtein&Ritov2004_persistence,AngelosanteEtAl2010_online}. The
corresponding divergence can be bounded per the following result.
\begin{fact}
If the hyperparameter satisfies
\begin{equation}
\covpsc \geq  \frac{2\eps_\star}{n},
\label{eq:assumption_lasso}
\end{equation}
then the divergence of the \textsc{Lasso}-based predictor $\what{y}(\x)$
from the optimal sparse predictor $\what{y}_\star(\x)$ is bounded by
\begin{equation}
\Delta_\star \: \leq \: 2\covpsc\| \coeff_\star \|_1.
\label{eq:bound_lasso}
\end{equation}
\end{fact}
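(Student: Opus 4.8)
The plan is to use the standard basic-inequality argument for the \textsc{Lasso}. First I would put the empirical risk in vector form: stacking the outputs as $\y = \col\{y_1, \dots, y_n\}$ gives $\erisk(\coeff) = \tfrac{1}{n}\| \y - \regM \coeff \|^2$, and by the definition of the prediction errors the oracle residual vector is exactly $\epsvec = \y - \regM \coeff_\star$. Writing $\coeffdiff = \what{\coeff} - \coeff_\star$, the target divergence becomes the $\regM$-induced seminorm of $\coeffdiff$, namely $\Delta_\star = \tfrac{1}{n}\| \regM \coeffdiff \|^2$.

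The key step is to exploit the optimality of $\what{\coeff}$ in \eqref{eq:coeffmin_lasso}. Since $\what{\coeff}$ minimizes $\erisk(\coeff) + \covpsc \| \coeff \|_1$, using $\coeff_\star$ as a feasible competitor gives
\begin{equation*}
\tfrac{1}{n}\| \y - \regM \what{\coeff} \|^2 + \covpsc \| \what{\coeff} \|_1 \; \leq \; \tfrac{1}{n}\| \epsvec \|^2 + \covpsc \| \coeff_\star \|_1 .
\end{equation*}
Substituting $\y - \regM \what{\coeff} = \epsvec - \regM \coeffdiff$, expanding the square, and cancelling the common term $\tfrac{1}{n}\| \epsvec \|^2$ rearranges this into
\begin{equation*}
\Delta_\star \; \leq \; \tfrac{2}{n}\, \epsvec^\T \regM \coeffdiff \; + \; \covpsc\bigl( \| \coeff_\star \|_1 - \| \what{\coeff} \|_1 \bigr),
\end{equation*}
so the whole problem reduces to controlling the cross term $\tfrac{2}{n}\epsvec^\T \regM \coeffdiff$.

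I would bound the cross term by H\"older's inequality, $|\epsvec^\T \regM \coeffdiff| \leq \| \regM^\T \epsvec \|_\infty \, \| \coeffdiff \|_1 = \eps_\star \| \coeffdiff \|_1$, where the equality is just the definition \eqref{eq:eps_constant} of $\eps_\star = \max_j |\epsvec^\T \regcol_j|$. This is precisely where the hyperparameter assumption \eqref{eq:assumption_lasso} enters: $\covpsc \geq 2\eps_\star/n$ makes the cross term at most $\covpsc \| \coeffdiff \|_1$. Finally, applying the triangle inequality $\| \coeffdiff \|_1 \leq \| \what{\coeff} \|_1 + \| \coeff_\star \|_1$ and combining with the remaining $\ell_1$ terms causes the $\| \what{\coeff} \|_1$ contributions to cancel, yielding $\Delta_\star \leq 2 \covpsc \| \coeff_\star \|_1$.

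This is an assumption-free slow-rate bound and the argument is essentially mechanical; the only delicate point is arranging the final cancellation. The factor $2$ in $\covpsc \geq 2\eps_\star/n$ is exactly what is needed so that, after the H\"older step, the $+\covpsc\|\coeffdiff\|_1$ bound telescopes against the $-\covpsc\|\what{\coeff}\|_1$ term via the triangle inequality, removing any residual dependence on the unknown solution $\what{\coeff}$ and leaving a bound that involves only the oracle norm $\|\coeff_\star\|_1$.
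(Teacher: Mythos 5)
Your proof is correct and follows essentially the same route as the paper's: the same decomposition $\Delta_\star \leq (\erisk - \erisk_\star) + \tfrac{2}{n}\epsvec^\T \regM(\what{\coeff}-\coeff_\star)$, H\"older's inequality with the definition of $\eps_\star$, the \textsc{Lasso} optimality inequality, and the triangle-inequality cancellation. The only difference is cosmetic ordering --- you invoke the basic inequality first and then bound the cross term, while the paper expands the empirical risk first --- so there is nothing to change.
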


\begin{proof}
See Appendix~\ref{app:bound_lasso}
\end{proof}

If the hyperparameter $\covpsc$ in \eqref{eq:coeffmin_lasso} satisfies \eqref{eq:assumption_lasso}, then
\eqref{eq:bound_lasso} ensures that the maximum divergence of
\textsc{Lasso}-based predictor from the optimal \LR{} predictor
remains bounded. Then redundant dimensions
of $\reg(\x)$ are  pruned away, which is a desirable property. The hyperparameter $\covpsc$ in
\eqref{eq:coeffmin_lasso} is typically learned offline using
\eqref{eq:learning_kfoldcv}, which is a limitation for the scenarios
of large and/or increasing $n$ considered herein. Moreover, the individual weights in \eqref{eq:generallinpredictor} are
constrained uniformly in the \textsc{Lasso} approach. To provide a more
flexible predictor that constrains the weights individually, we
consider an \LC{} approach next.

\begin{note}
For the special case in which the prediction errors of $\what{y}_\star(\x)$
are i.i.d. zero mean Gaussian with variance $\sigma^2$,
and the regressors are normalized such that $\| \regcol_j \|^2 \equiv n$, the inequality \eqref{eq:assumption_lasso} is satisfied with high
probability when $\covpsc = 
\sigma \sqrt{ (2  \ln p + \delta) / n}$ for some positive constant $\delta$, cf.
\cite{Buhlmann&VanDeGeer2011_highdim}. However, this is an infeasible choice since $\sigma^2$ is unknown and must be estimated. \noteend
\end{note}

\section{Linear combiner predictor}
\label{sec:modelbased}

\LC{} predictors are written in the following form
\begin{equation}
\boxed{\what{y}(\x) = \lin^\T(\x) \y,}
\label{eq:generallinsmoother}
\end{equation}
where $\y = \col\{ y_1, \dots, y_n \}$ contains the $n$ observed samples
and $\lin$ are weights. Given any $\x$, we will find an unbiased
predictor of $y$ that minimizes the conditional risk. That is, we seek the solution to
\begin{equation}\label{eq:BLUP_problem}
\begin{split}
\min_{\lin} \;  \risk(\lin|\x) , \quad \text{subject to} \:
\E\left[  y- \lin^\T \y  \: | \: \X, \x \right] = 0,
\end{split}
\end{equation} 
where
$$\risk(\lin|\x) = \E\left[ \bigl| y - \lin^\T \y \bigr|^2 \: | \:
  \X, \x \right],$$
and $\X$ denotes all inputs in $\dataset$. The weights $\lin$ are then
constrained by assuming a model of $y$ given $\x$.

Here we specify a simple model that does not rely on the
distributional form of $y$ but merely constrains its moments \cite{Stein1999_interpolation}:
\begin{itemize}
\item The conditional mean of $y$ is parameterized as 
\begin{equation}
\label{eq:function_mean}
\E[y|\x] = \ux^{\T}(\x) \nusn,
\end{equation}
where $\ux(\x)$ is a given $u \times 1$ function and $\nusn$ are unknown
coefficents. In the simplest case we have an unknown constant mean,
i.e., $\E[y|\x] \equiv w_0$, by setting $\ux(\x) \equiv
1$. Alternatively, one may consider an unknown affine function. 

\item The conditional covariance function can be
expressed as
\begin{equation}
\begin{split}
\Cov[y, y'|\x,\x'] &= \sum^{\infty}_{k=1} \covpsc_k \covbasis_k(\x)
\covbasis_k(\x') + \covpsc_0 \delta(\x, \x')
\end{split}
\label{eq:function_cov}
\end{equation}
using
Mercer's theorem
\cite{Stein1999_interpolation},  where $\covbasis_k(\x)$ is given and $\theta_k$ are nonnegative 
parameters. For a stationary process with an isotropic covariance
function, we may for instance use the Fourier or Laplace operator
basis $\covbasis_k(\x)$ \cite{Rahimi&Recht2007_random,Solin&Sarkka2014_hilbert}. 
\end{itemize}

To enable an computationally efficient online implementation, we consider a model with a truncated sum
of $q$ terms in \eqref{eq:function_cov}. We write
$$\regc(\x) = \col\{  \covbasis_1(\x), \dots, \covbasis_q(\x) \}$$
and the hyperparameter as
$$\covp \triangleq \col\{ \covpsc_0, \covpsc_1, \dots, \covpsc_q  \},$$ 
for notational simplicity.

\begin{note}
The model, specified by \eqref{eq:function_mean} and
\eqref{eq:function_cov}, is invariant with respect to the
distributional properties of the data. It therefore includes commonly
assumed distributions, such as Gaussian and Student-t
\cite{Rasmussen&Williams2006_gaussian,ShahEtAl2014_student}. \noteend
\end{note}

\begin{note}
When $y$ is modelled as a stationary process, 
the Fourier or Laplace
operator basis can be used in \eqref{eq:function_cov} to parameterize its
power spectral density via $\{ \covpsc_k \}$. This can be interpreted as
a way to parameterize the smoothness of the process \cite{Stein1999_interpolation}. \noteend
\end{note}

\begin{fact}
 Under
 model assumptions \eqref{eq:function_mean} and
 \eqref{eq:function_cov}, the  covariance properties of the
 training data are given by
\begin{equation}\label{eq:statisticalmoments_cov}
\begin{split}
\ycov &\triangleq \Cov[\y| \X] = \regcM \Lam \regcM^\top + \covpsc_0
\mbf{I}_n \\
\ycross(\x) &= \Cov[\y, y | \X, \x] = \regcM \Lam \regc(\x) ,
\end{split}
\end{equation}
where \begin{equation*}
\begin{split}
\regcM &= \begin{bmatrix} \regc(\mbf{x}_1) &
  \cdots & \regc(\mbf{x}_n) \end{bmatrix}^\T,\\
\Lam &= \diag(\covpsc_1, \dots, \covpsc_q).
\end{split} 
\end{equation*}
\end{fact}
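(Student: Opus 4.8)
The plan is to establish both identities by a direct entrywise computation, exploiting the fact that the truncated covariance model is a quadratic form that assembles exactly into the claimed matrix products. Throughout I work with the $q$-term truncation of \eqref{eq:function_cov}, namely $\Cov[y,y'|\x,\x'] = \sum_{k=1}^q \covpsc_k \covbasis_k(\x)\covbasis_k(\x') + \covpsc_0 \delta(\x,\x')$, since this is the model actually used in the online implementation.

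First I would treat $\ycov$. Its $(i,j)$ entry is $\Cov[y_i,y_j|\X]$, which by the truncated model equals $\sum_{k=1}^q \covpsc_k \covbasis_k(\x_i)\covbasis_k(\x_j) + \covpsc_0 \delta(\x_i,\x_j)$. The key observation is that the first sum is precisely the bilinear form $\regc^\T(\x_i)\Lam\regc(\x_j)$, since $\Lam = \diag(\covpsc_1,\dots,\covpsc_q)$ and the entries of $\regc(\x) = \col\{\covbasis_1(\x),\dots,\covbasis_q(\x)\}$ are the truncated basis functions. Because $\regc^\T(\x_i)$ is by construction the $i$th row of $\regcM$, stacking these bilinear forms over all $i$ and $j$ reproduces $[\regcM \Lam \regcM^\T]_{ij}$, which gives the first summand of the claimed $\ycov$.

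The one step requiring care is the nugget term $\covpsc_0 \delta(\x_i,\x_j)$. To collect these contributions into $\covpsc_0 \I_n$ one must assume the training inputs are distinct, so that $\delta(\x_i,\x_j)$ reduces to the ordinary Kronecker delta $\delta_{ij}$; under this standard assumption the diagonal accumulates $\covpsc_0$ while every off-diagonal nugget contribution vanishes, completing the expression for $\ycov$. The same mechanism settles the cross-covariance: the $i$th entry of $\ycross(\x)$ is $\Cov[y_i,y|\X,\x] = \sum_{k=1}^q \covpsc_k \covbasis_k(\x_i)\covbasis_k(\x) + \covpsc_0 \delta(\x_i,\x)$, whose sum is again $\regc^\T(\x_i)\Lam\regc(\x)$, and since the test point $\x$ is distinct from every training input the nugget term $\covpsc_0 \delta(\x_i,\x)$ is zero. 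Stacking over $i$ then yields $\ycross(\x) = \regcM \Lam \regc(\x)$. Apart from the distinctness assumption that renders the nugget an identity (respectively zero) contribution, both identities reduce to bookkeeping of a single quadratic form, so I anticipate no further difficulty.
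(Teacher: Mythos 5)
Your proof is correct and follows essentially the same route as the paper's: an elementwise application of the truncated covariance model to the training pairs, recognizing each entry $\sum_{k=1}^q \covpsc_k \covbasis_k(\x_i)\covbasis_k(\x_j)$ as the bilinear form $\regc^\T(\x_i)\Lam\regc(\x_j)$ and stacking into $\regcM\Lam\regcM^\T$. The only difference is your treatment of the nugget: you read $\covpsc_0\,\delta(\x,\x')$ literally on the inputs and therefore invoke distinctness of the training inputs (and of the test point), whereas the paper writes the training-data nugget directly as $\covpsc_0\,\delta(i,j)$ --- the standard per-sample measurement-noise interpretation, under which no distinctness assumption is needed --- so your caveat is a fair reading of \eqref{eq:function_cov} as stated but an interpretive refinement rather than a different argument.
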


\begin{proof}
The result follows from an elementwise application of \eqref{eq:function_cov}
to the training data:
\begin{equation*}
\begin{split}
\Cov[y_i,y_j | \x_i, \x_j]&=\sum^q_{k=1} \covpsc_k \covbasis_k(\x_i)
\covbasis_k(\x_j) + \covpsc_0 \delta(i,j),\\
\Cov[y_i,y| \x_i, \x] &= \sum^q_{k=1} \covpsc_k \covbasis_k(\x_i)
\covbasis_k(\x) .\\
\end{split}
\end{equation*}
\end{proof}

\begin{fact}
 The weights of the optimal \LC{} predictor $\what{y}(\x; \covp)$ in \eqref{eq:generallinsmoother} are given by
\begin{equation}
\begin{split}
\lin(\x) &= \ycov^{-1} \U (\U^\top \ycov^{-1} \U)^\dagger \ux(\x) + \ycov^{-1}
\proj^{\perp} \ycross(\x),
\end{split}
\label{eq:optimallin}
\end{equation}
where
\begin{equation*}
\U = \begin{bmatrix} \mbf{u}(\mbf{x}_1) & 
  \cdots & \mbf{u} (\mbf{x}_n) \end{bmatrix}^\T .
\end{equation*}
and $\proj^{\perp}$ is a projector onto $\text{span}(\U)^{\perp}$.
\end{fact}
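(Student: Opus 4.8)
The plan is to recognize \eqref{eq:BLUP_problem} as a constrained quadratic program and to solve it through the stationarity conditions of its Lagrangian. First I would observe that, once the unbiasedness constraint holds, the prediction error $y - \lin^\T\y$ has zero conditional mean, so the conditional risk collapses to the conditional variance. Using the moments in \eqref{eq:statisticalmoments_cov} this gives
\begin{equation*}
\risk(\lin|\x) = \Var[y|\x] - 2\lin^\T\ycross(\x) + \lin^\T\ycov\lin .
\end{equation*}
The first term is independent of $\lin$, so minimizing the risk is equivalent to minimizing the convex quadratic $\lin^\T\ycov\lin - 2\lin^\T\ycross(\x)$ (convexity holds since $\ycov \succeq 0$, and we use $\ycov^{-1}$ throughout as in the statement).

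Next I would convert the constraint into linear form. Substituting the mean model \eqref{eq:function_mean} turns $\E[y - \lin^\T\y|\X,\x]=0$ into $(\ux(\x) - \U^\T\lin)^\T\nusn = 0$. Since this must hold for \emph{every} admissible coefficient vector $\nusn$, it is equivalent to the linear equality $\U^\T\lin = \ux(\x)$. The problem is therefore a convex QP with linear equality constraints, for which any Lagrangian stationary point is automatically a global minimizer.

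I would then introduce a multiplier vector $\lag$ and form
\begin{equation*}
L(\lin,\lag) = \lin^\T\ycov\lin - 2\lin^\T\ycross(\x) + 2\lag^\T(\U^\T\lin - \ux(\x)).
\end{equation*}
Setting $\nabla_\lin L = 0$ yields $\ycov\lin = \ycross(\x) - \U\lag$, hence $\lin = \ycov^{-1}(\ycross(\x) - \U\lag)$. Imposing $\U^\T\lin = \ux(\x)$ gives a linear system for the multiplier whose solution is $\lag = (\U^\T\ycov^{-1}\U)^\dagger(\U^\T\ycov^{-1}\ycross(\x) - \ux(\x))$. Back-substituting and grouping the terms that multiply $\ux(\x)$ and $\ycross(\x)$ separately reproduces \eqref{eq:optimallin}, with the identification $\proj^{\perp} = \I - \U(\U^\T\ycov^{-1}\U)^\dagger\U^\T\ycov^{-1}$; a one-line check that $\proj^{\perp}\U = \0$ and $(\proj^{\perp})^2 = \proj^{\perp}$ confirms that it is the (oblique) projector annihilating $\col(\U)$, as claimed.

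The main obstacle I anticipate is the careful use of the pseudoinverse when $\U^\T\ycov^{-1}\U$ is rank-deficient: I would need to argue that the constraint $\U^\T\lin = \ux(\x)$ stays consistent (i.e. that $\ux(\x)$ lies in the relevant range) and that, although $\lag$ is then pinned down only up to the null space of $\U^\T\ycov^{-1}\U$, the resulting weight vector $\lin(\x)$ is nonetheless unique. Establishing that the pseudoinverse genuinely selects a valid KKT point, rather than merely formally inverting a singular matrix, is the step requiring the most care; the remainder is routine matrix algebra.
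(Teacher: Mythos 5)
Your proposal is correct and follows essentially the same route as the paper's proof in Appendix~B: reduce the conditional risk to the convex quadratic $\lin^\T\ycov\lin - 2\lin^\T\ycross(\x)$ via the unbiasedness constraint, rewrite that constraint (holding for all $\nusn$) as $\U^\T\lin = \ux(\x)$, solve the Lagrangian stationarity conditions for $\lag$ and back-substitute, and identify $\proj^{\perp} = \I - \U(\U^\T\ycov^{-1}\U)^{\dagger}\U^\T\ycov^{-1}$. Your concern about rank deficiency is handled the same way in the paper, which uses a generalized inverse and notes the consistency assumption $\ux(\x) \in \mathcal{R}(\U^\top)$; your added verification that $\proj^{\perp}$ annihilates $\col(\U)$ and is idempotent is a harmless extra check.
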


\begin{proof}
See Appendix~\ref{app:optimalfilter}.
\end{proof}

Consequently, the hyperparameters in $\covp$ constrain the optimal weights
$\lin$ via the model \eqref{eq:function_mean} and
 \eqref{eq:function_cov}. If the model were correctly specified, we
 could in principle search for a risk-minimizing $\covp$. This would,
 however, be intractable in an online setting.

Next, we show that it is possible to interchange \LC{} and \LR{}  
representations, \eqref{eq:generallinsmoother} and
\eqref{eq:generallinpredictor}, of the predictor
$\what{y}(\x; \covp)$.
\begin{fact}
The optimal \LC{} predictor in \eqref{eq:generallinsmoother} and \eqref{eq:optimallin} can be written in \LR{} form
\begin{equation}
\what{y}(\x ; \covp) = \reg^\T(\x) \what{\coeff},
\label{eq:predictor_modelbased}
\end{equation}
where 
\begin{equation}
\reg(\x) = \col\{   \ux(\x) , \regc(\x) \}
\label{eq:regressor_smooth}
\end{equation}
and the $p=u+q$ weights are given by
\begin{equation}
\what{\coeff} = \argmin_{\coeff} \: \erisk(\coeff) +
\frac{\covpsc_0}{n}\| \coeff \|^2_{\weightM},
\label{eq:coeffmin_smooth}
\end{equation}
where $\weightM = \diag(\0, \Lam^{-1})$.
\end{fact}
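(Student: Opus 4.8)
The plan is to write the penalised least-squares problem \eqref{eq:coeffmin_smooth} in block form, solve its normal equations in closed form, and then recognise the fitted values $\reg^\T(\x)\what{\coeff}$ as the \LC{} predictor $\lin^\T(\x)\y$ with the weights \eqref{eq:optimallin}; the bridge between the two forms will be the matrix inversion lemma applied to $\ycov = \regcM\Lam\regcM^\T + \covpsc_0\I_n$. Substituting the regressor \eqref{eq:regressor_smooth} partitions the regressor matrix as $\regM = \begin{bmatrix} \U & \regcM \end{bmatrix}$, so that, writing $\coeff = \col\{\coeff_1, \coeff_2\}$ with $\coeff_1$ the $u$-dimensional mean block and $\coeff_2$ the $q$-dimensional covariance block, the empirical risk is $\erisk(\coeff) = n^{-1}\|\y - \U\coeff_1 - \regcM\coeff_2\|^2$ while $\weightM = \diag(\0,\Lam^{-1})$ yields the penalty $\covpsc_0 n^{-1}\coeff_2^\T\Lam^{-1}\coeff_2$. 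Thus \eqref{eq:coeffmin_smooth} is a \emph{partially} penalised problem in which the mean weights $\coeff_1$ are unconstrained and only the covariance weights $\coeff_2$ are ridge-shrunk through $\Lam^{-1}$; its normal equations read
\begin{equation*}
\U^\T(\y - \U\coeff_1 - \regcM\coeff_2) = \0, \qquad \regcM^\T(\y - \U\coeff_1 - \regcM\coeff_2) = \covpsc_0\Lam^{-1}\coeff_2 .
\end{equation*}

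Next I would solve this system, assuming for regularity that $\U$ has full column rank and $\Lam \succ \0$ (the degenerate cases, e.g.\ a pruned $\covpsc_k = 0$, being recovered through the generalised-inverse/limiting convention). Rearranging the second equation gives $\coeff_2 = (\regcM^\T\regcM + \covpsc_0\Lam^{-1})^{-1}\regcM^\T(\y - \U\coeff_1)$, and the push-through identity $\Lam\regcM^\T\ycov^{-1} = (\regcM^\T\regcM + \covpsc_0\Lam^{-1})^{-1}\regcM^\T$ --- which is precisely the matrix inversion lemma for $\ycov$ --- rewrites this as $\what{\coeff}_2 = \Lam\regcM^\T\ycov^{-1}(\y - \U\what{\coeff}_1)$. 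Feeding this back into the first equation and using $\regcM\Lam\regcM^\T = \ycov - \covpsc_0\I_n$ collapses it to $\U^\T\ycov^{-1}(\y - \U\coeff_1) = \0$, whence $\what{\coeff}_1 = (\U^\T\ycov^{-1}\U)^{-1}\U^\T\ycov^{-1}\y$. Finally I would form $\reg^\T(\x)\what{\coeff} = \ux^\T(\x)\what{\coeff}_1 + \regc^\T(\x)\what{\coeff}_2$ and invoke \eqref{eq:statisticalmoments_cov}, i.e.\ $\regc^\T(\x)\Lam\regcM^\T = \ycross^\T(\x)$, to collect the result into the sum of $\ux^\T(\x)(\U^\T\ycov^{-1}\U)^{-1}\U^\T\ycov^{-1}\y$ and $\ycross^\T(\x)\ycov^{-1}\proj^\perp\y$, with $\proj^\perp = \I_n - \U(\U^\T\ycov^{-1}\U)^{-1}\U^\T\ycov^{-1}$; since $\ycov^{-1}\proj^\perp = \proj^{\perp\T}\ycov^{-1}$, this is exactly $\lin^\T(\x)\y$ for the $\lin(\x)$ of \eqref{eq:optimallin}.

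The step I expect to be the main obstacle is the unpenalised \emph{mean block}. Because $\coeff_1$ carries no penalty, the naive instinct is to profile it out in the Euclidean inner product, which would produce the ordinary projector $\I_n - \U(\U^\T\U)^{-1}\U^\T$; yet the function-space weights \eqref{eq:optimallin} involve the \emph{oblique} projector $\proj^\perp$ defined in the $\ycov^{-1}$ metric. Reconciling the two --- showing that eliminating $\coeff_1$ from the coupled normal equations reproduces the $\ycov^{-1}$-weighted projector rather than the Euclidean one --- is where the matrix inversion lemma does the real work, and it hinges on applying the push-through identity to $\coeff_2$ \emph{before} eliminating $\coeff_1$, so that the cross term $\U^\T\regcM\Lam\regcM^\T\ycov^{-1}$ simplifies via $\regcM\Lam\regcM^\T = \ycov - \covpsc_0\I_n$. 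Once this reconciliation is in place, the rank/pseudoinverse caveats and the final collection of terms are routine.
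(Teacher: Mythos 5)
Your proposal is correct and takes essentially the same route as the paper's Appendix~B: both solve the penalised covariance block via the matrix-inversion (push-through) identity $\Lam\regcM^\T\ycov^{-1} = (\regcM^\T\regcM + \covpsc_0\Lam^{-1})^{-1}\regcM^\T$, reduce the unpenalised mean block to the generalised least-squares condition $\U^\T\ycov^{-1}(\y - \U\coeff_0) = \0$ using $\regcM\Lam\regcM^\T = \ycov - \covpsc_0\I_n$, and then match the resulting block coefficients to the \LC{} weights \eqref{eq:optimallin}. The only cosmetic differences are that you eliminate the mean block through the stationarity equations whereas the paper substitutes the minimising covariance block back into the cost and concentrates it, and that the paper works with generalised inverses $(\U^\T\ycov^{-1}\U)^\dagger$ where you assume full rank --- a caveat you already flag.
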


\begin{proof}
See Appendix~\ref{app:equivalence}.
\end{proof}

\begin{note}
Note
that, if we impose $\covpsc_1 = \cdots = \covpsc_q \equiv
1$, \eqref{eq:coeffmin_smooth} is equivalent to the ridge regression
method \cite{Hoerl&Kennard1970_ridge}. \noteend
\end{note}

\begin{note}
The \LC{} predictor has an \LR{} intepretation: when \eqref{eq:function_cov} is formed using the Fourier or
Laplace operator basis, the corresponding regressor \eqref{eq:regressor_smooth}
will approximate any
isotropic covariance function by varying
$\covp$. This
includes covariance functions belonging to the important Mat\'{e}rn
class, of which the popular
squared-exponential function is a special case
\cite[ch.~6.5]{Stein1999_interpolation}. Cf. \cite{Rahimi&Recht2007_random,Solin&Sarkka2014_hilbert,FelixEtAl2016_orthogonal}
for regressor functions $\reg(\x)$ with good approximation
properties. Similarly, by partitioning an arbitrary regressor $\reg(\x)$ into the form \eqref{eq:regressor_smooth} yields an
\LC{} interpretation of an \LR{} predictor. The regressor matrix is then
$\regM =[ \U \: \regcM]$.  For instance, a standard
regressor is the affine function $\reg(\x) = \col\{ 1, \x \}$, which
can be interpreted as a constant mean with a covariance function that
is quadratic in $\x$. \noteend
\end{note}



The elements of $\covp$ constrain each of the weights \eqref{eq:coeffmin_smooth} individually, unlike the uniform approach in
\eqref{eq:coeffmin_lasso}. Thus the hyperparameter in the optimal
\LC{} predictor
determines the relevance of individual features in $\reg(\x)$, similar
to the automatic relevance determination framework in
\cite{Tipping2001_sblrvm,Faul&Tipping2002_analysis,Wipf&Nagarajan2008_ard}. While
this enables a more flexible predictor,  the $\ell_2$-regularization
term in \eqref{eq:coeffmin_smooth} does not yield the sparsity
property of the \textsc{Lasso}-based predictor. More importantly,
the learning approach \eqref{eq:learning_kfoldcv} is intractable when
$\covp$ is multidimensional. We show how to get around this issue in the next section.


\section{Online learning via covariance fitting}
\label{sec:covariance}

We consider a covariance-fitting approach for learning $\covp$ in
the flexible \LC{} predictor. We show that using the learned
hyperparameter results in a predictor with desirable, computational and
distribution-free, properties.

The normalized sample covariance matrix of the training data can be written as
\begin{equation*}
\begin{split}
\wtilde{\scov} &= \frac{ (\y - \U \nusn) (\y - \U \nusn)^\top}{\| \y -
  \U \nusn \|_2},
\end{split}
\end{equation*}
using the mean structure \eqref{eq:function_mean}. The assumed
covariance structure $\ycov$ is
parameterized by $\covp$ in \eqref{eq:statisticalmoments_cov}. We seek to fit $\ycov$ to
the sample covariance in the following sense:
\begin{equation}
\begin{split}
\boxed{\covp^\star = \argmin_{\covp}  \quad \bigl \| \: \widetilde{\scov} -
\ycov \: \bigr \|^2_{\ycov^{-1}},}
\end{split}
\label{eq:covariancematch_original}
\end{equation}
using a weighted norm that penalizes correlated residuals, cf. \cite{Cressie1985_fitting,Anderson1989_linear,OtterstenEtAl1998_covariance}. To match
the magnitude of the normalized sample covariance we 
subject the parameters in \eqref{eq:covariancematch_original} to the normalization constraint
\begin{equation}
\text{tr}\bigl\{ \wtilde{\scov} - \ycov \bigr \} = 0.
\label{eq:covariancematch_original_constraint}
\end{equation}
\begin{fact}
The learning problem defined by \eqref{eq:covariancematch_original} is convex in $\covp$ and will therefore not suffer from local minima issues.
\end{fact}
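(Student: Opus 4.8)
The plan is to show that, on the convex region where $\ycov(\covp)$ is positive definite, the objective of \eqref{eq:covariancematch_original} is a convex function of $\covp$ while the feasible set carved out by \eqref{eq:covariancematch_original_constraint} is convex; a convex objective minimized over a convex set has no spurious local minima, which is precisely the claim.

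First I would record the structural facts and reduce the matrix criterion to a scalar one. From \eqref{eq:statisticalmoments_cov}, $\ycov(\covp) = \covpsc_0\I_n + \sum_{k=1}^q \covpsc_k\,[\regcM]_k[\regcM]_k^\T$ is an affine (indeed linear) matrix-valued function of $\covp$, positive definite on the natural convex domain $\{\covpsc_0>0,\ \covpsc_k\ge 0\}$. The constraint \eqref{eq:covariancematch_original_constraint} reads $\tr\{\ycov(\covp)\} = \tr\{\wtilde{\scov}\} = \|\nusv\|_2$ with $\nusv \triangleq \y - \U\nusn$; since $\tr\{\ycov(\covp)\}$ is linear, this is a single linear equality, so the feasible set is convex. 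Because $\wtilde{\scov} = \nusv\nusv^\T/\|\nusv\|_2$ is rank one, expanding the weighted norm and repeatedly using $\tr\{\nusv\nusv^\T\mbf{A}\,\nusv\nusv^\T\mbf{B}\} = (\nusv^\T\mbf{A}\nusv)(\nusv^\T\mbf{B}\nusv)$ collapses every matrix term onto the single scalar $a(\covp) \triangleq \nusv^\T\ycov^{-1}(\covp)\nusv$, the linear term $\tr\{\ycov(\covp)\}$, and data constants.

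The crux is the convexity of $a(\covp)$, together with the squared term it generates. The map $\ycov \mapsto \nusv^\T\ycov^{-1}\nusv$ is the matrix-fractional function, whose epigraph
\[
\Bigl\{ (\covp,t) \: : \: \begin{bmatrix} \ycov(\covp) & \nusv \\ \nusv^\T & t \end{bmatrix} \succeq \0 \Bigr\}
\]
is an LMI affine in $(\covp,t)$, hence convex; precomposition with the affine map $\covp\mapsto\ycov(\covp)$ preserves convexity, so $a(\covp)$ is convex and $\tr\{\ycov(\covp)\}$ is linear. The delicate term is the quadratic $a(\covp)^2$ (present under a two-sided reading of $\|\cdot\|_{\ycov^{-1}}$), since the square of a convex function need not be convex. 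This is where the normalization constraint is essential: in the eigenbasis of $\ycov(\covp)$, with eigenvalues $\sigma_i$ and coordinates $\wtilde{\nusv}_i$, Cauchy--Schwarz gives $\bigl(\sum_i|\wtilde{\nusv}_i|\bigr)^2 \le \bigl(\sum_i \wtilde{\nusv}_i^2/\sigma_i\bigr)\bigl(\sum_i\sigma_i\bigr) = a(\covp)\,\tr\{\ycov(\covp)\}$, while $\bigl(\sum_i|\wtilde{\nusv}_i|\bigr)^2 \ge \sum_i\wtilde{\nusv}_i^2 = \|\nusv\|_2^2$; combining these and imposing $\tr\{\ycov(\covp)\} = \|\nusv\|_2$ yields $a(\covp) \ge \|\nusv\|_2$ throughout the feasible set. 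On that set $t\mapsto t^2$ is nondecreasing on $[\|\nusv\|_2,\infty)$, so $a(\covp)^2$ is a nondecreasing-convex composition of the convex $a(\covp)$ and therefore convex. Hence the reduced objective is convex on the convex feasible set, and I would conclude the absence of spurious local minima.

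I expect the main obstacle to be exactly this last point. After the mechanical rank-one reduction the objective carries a squared convex term that is \emph{not} convex in general, and convexity is rescued only by the trace-normalization constraint through the Cauchy--Schwarz bound $a(\covp)\ge\|\nusv\|_2$. (Under a one-sided reading of the weighted norm the squared term is absent, $a(\covp)$ enters linearly, and convexity follows immediately from the matrix-fractional argument without invoking this bound.)
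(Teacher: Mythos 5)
Your main line coincides with the paper's own proof: the paper expands the criterion \eqref{eq:covariancematch_original} into $\wtilde{\y}^\T \ycov^{-1} \wtilde{\y} + \tr\{\ycov\}$ (the cross term is a data constant because $\wtilde{\scov}$ is normalized rank one), notes that $\ycov$ in \eqref{eq:statisticalmoments_cov} is linear in $\covp$ so that \eqref{eq:covariancematch_original_constraint} becomes the linear equality $n\covpsc_0 + \sum_{j}\|\regcol_j\|^2\covpsc_j = \rho$, and then handles the matrix-fractional term exactly as you do, via the Schur-complement LMI epigraph $\alpha \geq \wtilde{\y}^\T\ycov^{-1}\wtilde{\y}$, recognizing the whole problem as a semidefinite program. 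Where you genuinely depart is the hedge about the norm convention: the paper's expansion shows the intended weighting is the one-sided $\tr\{(\wtilde{\scov}-\ycov)\ycov^{-1}(\wtilde{\scov}-\ycov)\}$ of the \textsc{Spice} literature, so the scalar $a(\covp)=\wtilde{\y}^\T\ycov^{-1}\wtilde{\y}$ enters linearly, the squared term never arises, and your Cauchy--Schwarz machinery is unnecessary for this fact. That digression is nevertheless correct and buys something the paper does not have: under the two-sided reading the objective reduces (up to an additive constant) to a square of the convex, nonnegative quantity $a(\covp)/\|\wtilde{\y}\|_2 - 1$, and your bound $a(\covp) \geq \|\wtilde{\y}\|_2$ on the trace-normalized feasible set correctly rescues convexity of the composition there, showing the conclusion is robust to the choice of weighted norm --- at the price of a proof valid only on the constraint set, whereas the paper's SDP formulation is convex outright. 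Like the paper, you correctly treat $\nusn$ as fixed in $\wtilde{\y} = \y - \U\nusn$, so no gap arises on that account.
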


\begin{proof}
See Appendix~\ref{app:convexity}.
\end{proof}

\begin{note}
Eq. \eqref{eq:covariancematch_original} is a generalization of the
 covariance-fitting criterion in
\cite{StoicaEtAl2011_newspectral,StoicaEtAl2011_spice} to the case of
nonzero mean structures of the data. See also
\cite{StoicaEtAl2014_weightedspice} for further connections. \noteend
\end{note}


\subsection{Properties of the resulting predictor}

Using \eqref{eq:covariancematch_original} in the \LC{} predictor from
\eqref{eq:generallinsmoother} and \eqref{eq:optimallin}, we obtain the
following result:
\begin{fact}
The \LC{} predictor with a learned
parameter $\covp^\star$ has the following \LR{} form:
\begin{equation}
\what{y}(\x; \covp^\star) = \reg^\T(\x) \what{\coeff}
\label{eq:spice_predictor}
\end{equation}
where
\begin{equation}
\boxed{\what{\coeff} \: = \: \argmin_{\coeff} \; \sqrt{\erisk(\coeff)}
\: +\: \frac{1}{\sqrt{n}} \| \weight \odot \coeff  \|_1}
\label{eq:coeffmin_spice}
\end{equation}
and the elements of $\weight$ are set to
\begin{equation}
\weightsc_j = \begin{cases}
\frac{1}{\sqrt{n}} \| \regcol_j \|_2, & j > u \\
0, & \text{otherwise}
\end{cases}.
\label{eq:weight_spice}
\end{equation}
\end{fact}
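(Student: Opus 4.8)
The plan is to prove the statement in two stages: first reduce the convex covariance-fitting problem \eqref{eq:covariancematch_original}--\eqref{eq:covariancematch_original_constraint} to a minimization of the quadratic form $g = \nusv^\T\ycov^{-1}\nusv$, with $\nusv = \y - \U\nusn$, under the trace normalization; and then convert that minimization into the weighted square-root \textsc{Lasso} \eqref{eq:coeffmin_spice} by means of two variational identities. The passage back to the predictor \eqref{eq:spice_predictor} is then supplied by the \LC{}--\LR{} equivalence already established in \eqref{eq:predictor_modelbased}--\eqref{eq:coeffmin_smooth}.

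First I would expand the weighted norm using the rank-one structure $\wtilde{\scov} = \nusv\nusv^\T/\|\nusv\|_2$. Writing $g = \nusv^\T\ycov^{-1}\nusv$ and using $\tr\{\wtilde{\scov}\} = \|\nusv\|_2$, a direct computation gives $\|\wtilde{\scov} - \ycov\|^2_{\ycov^{-1}} = g^2/\|\nusv\|_2^2 - 2g/\|\nusv\|_2 + n$, a parabola in $g$ with vertex at $g = \|\nusv\|_2$. The constraint \eqref{eq:covariancematch_original_constraint} fixes $\tr\{\ycov\} = \|\nusv\|_2$, and the Cauchy--Schwarz bound $\|\nusv\|_2^4 \le (\nusv^\T\ycov^{-1}\nusv)(\nusv^\T\ycov\nusv) \le g\,\tr\{\ycov\}\,\|\nusv\|_2^2$ then forces $g \ge \|\nusv\|_2$. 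Since the parabola increases on this feasible range, minimizing the fitting criterion is equivalent to minimizing $g = \nusv^\T\ycov^{-1}\nusv$ under the trace normalization.

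Next I would apply two identities to the reduced problem. Using the decomposition $\ycov = \regcM\Lam\regcM^\T + \covpsc_0\I_n = \sum_{j>u}\covpsc_{j-u}\regcol_j\regcol_j^\T + \covpsc_0\I_n$ from \eqref{eq:statisticalmoments_cov}, the least-norm representation identity gives $\nusv^\T\ycov^{-1}\nusv = \min_{\coeff}\bigl\{ \|\y - \regM\coeff\|_2^2/\covpsc_0 + \sum_{j>u} w_j^2/\covpsc_{j-u}\bigr\}$, where $\y-\regM\coeff = \nusv - \regcM\beta$ and the first $u$ coordinates of $\coeff$ carry the mean $\nusn$. Minimizing the right-hand side over $\covp\ge\0$ under the trace constraint $\sum_{j>u}\covpsc_{j-u}\|\regcol_j\|_2^2 + n\covpsc_0 = c$ is a Cauchy--Schwarz problem $\min_{\theta}\sum_k b_k^2/\theta_k$ subject to $\sum_k a_k\theta_k = c$, whose value is $\tfrac1c(\sum_k |b_k|\sqrt{a_k})^2$. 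With $b_0 = \|\y-\regM\coeff\|_2$, $a_0 = n$ and $b_j = |w_j|$, $a_j = \|\regcol_j\|_2^2$ for $j>u$, the objective collapses to $\tfrac1c\bigl(\sqrt n\,\|\y - \regM\coeff\|_2 + \sum_{j>u}\|\regcol_j\|_2\,|w_j|\bigr)^2$. Since $\erisk(\coeff) = \tfrac1n\|\y-\regM\coeff\|_2^2$ and $\weightsc_j = \|\regcol_j\|_2/\sqrt n$ for $j>u$ with $\weightsc_j = 0$ otherwise (per \eqref{eq:weight_spice}), the bracket equals $n\bigl(\sqrt{\erisk(\coeff)} + \tfrac1{\sqrt n}\|\weight\odot\coeff\|_1\bigr)$, so the minimizing representation is exactly the minimizer $\what\coeff$ of \eqref{eq:coeffmin_spice}. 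Finally I would invoke \eqref{eq:predictor_modelbased}--\eqref{eq:coeffmin_smooth}: at $\covp = \covp^\star$ the \LC{} predictor is $\reg^\T(\x)\what\coeff$ with $\what\coeff$ the weighted-ridge solution, and one checks that this ridge solution coincides with the least-norm representation attaining $\nusv^\T{\ycov^\star}^{-1}\nusv$, which closes the argument.

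The step I expect to be the main obstacle is the reduction in the second paragraph together with the joint handling of the mean parameters $\nusn$. Because $\nusv = \y - \U\nusn$ enters both the objective $g$ and the normalization constant $c = \|\nusv\|_2$, I must verify that the minimization over $\nusn$ is consistent with the unpenalized first $u$ coordinates of the square-root \textsc{Lasso}, and that it is the minimizing \emph{argument}, not merely the optimal value, that is preserved through the chain of identities. Establishing that the normalization can be decoupled from the mean estimate so that the two variational reformulations compose cleanly into \eqref{eq:coeffmin_spice} is the delicate part of the proof.
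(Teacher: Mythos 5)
Your argument is correct, and its second half is in fact the same computation the paper performs, while its first half takes a genuinely different route. The two variational identities you use are precisely the paper's: the representation $\nusv^\T\ycov^{-1}\nusv = \min_{\coeff_1}\bigl\{\covpsc^{-1}_0\|\nusv - \regcM\coeff_1\|^2_2 + \|\coeff_1\|^2_{\Lam^{-1}}\bigr\}$ is the concentration carried out in Appendix~\ref{app:equivalence}, and your constrained Cauchy--Schwarz step $\min_{\covp}\sum_k b_k^2/\covpsc_k$ subject to $\sum_k a_k\covpsc_k = c$, with optimum $\covpsc_k \propto |b_k|/\sqrt{a_k}$, reproduces (up to the fixed prefactor $1/c$, which is harmless since $x \mapsto x^2/c$ is monotone and so preserves the argmin over $\coeff$) the paper's closed-form elimination $\hat{\covpsc}_k = |b_k|/\sqrt{a_k}$ in the augmented problem \eqref{eq:covmatch_aug}. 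Where you differ is in how the normalization is handled. The paper \emph{drops} the constraint \eqref{eq:covariancematch_original_constraint}, proves by a scaling argument that the two terms of the unconstrained criterion \eqref{eq:covariancematch} are equal at the optimum (the $\kappa = 1$ step), deduces $\covp^\star \propto \what{\covp}$, and then invokes the invariance of the \LC{} predictor \eqref{eq:app_equivalentform} under uniform scaling of $\covp$ to transfer between the two problems. You instead keep the constraint throughout, reducing directly to $\min_{\covp} \, \nusv^\T\ycov^{-1}\nusv$ via the parabola in $g$ together with the bound $g \geq \|\nusv\|_2$ on the feasible set; this is a clean shortcut that never needs the scale-invariance observation, at the price of carrying the normalization constant $c$ through the derivation, which is exactly where your flagged difficulty with the mean parameters enters.

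That flagged difficulty is resolvable and does not break your proof, but it deserves to be stated the way the paper implicitly does: the fitting problem \eqref{eq:covariancematch_original} is an $\argmin$ over $\covp$ alone, and the constraint is treated as the linear equality \eqref{eq:covariancematch_original_constraint1} with a \emph{constant} right-hand side $\rho$ (see Appendix~\ref{app:convexity}). With $c$ fixed, your two reformulations compose cleanly, the $1/c$ prefactor affects no argmin, and the unpenalized first $u$ coordinates of \eqref{eq:coeffmin_spice} come out consistent with the GLS mean estimate $\what{\coeff}_0$ in \eqref{eq:app_equivalentform}; if instead one insists that $c = \|\y - \U\nusn\|_2$ varies with an optimized $\nusn$, the $\nusn$-argmin would be distorted, and note that the paper's own proof would face the identical issue, since its rescaling argument also holds $\scov$ fixed. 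Alternatively, you can sidestep the scale entirely by borrowing the paper's device, $\what{y}(\x; c\,\covp) = \what{y}(\x; \covp)$ for all $c > 0$. One further cosmetic discrepancy: your expansion $g^2/\|\nusv\|_2^2 - 2g/\|\nusv\|_2 + n$ presupposes the two-sided weighting $\tr\{\ycov^{-1}(\cdot)\ycov^{-1}(\cdot)\}$, whereas the paper's expansion $\wtilde{\y}^\T\ycov^{-1}\wtilde{\y} + \tr\{\ycov\}$ in Appendix~\ref{app:convexity} corresponds to one-sided weighting, under which your Stage~1 is even more immediate: on the constraint set the criterion equals $g - \|\nusv\|_2$, so minimizing it is minimizing $g$ outright. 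Under either convention your first-stage conclusion stands.
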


\begin{proof}
See Appendix~\ref{app:spice_predictor}.
\end{proof}
Thus the covariance-based learning approach endows the predictor with a sparsity
property, via the weighted $\ell_1$-regularization term in
\eqref{eq:coeffmin_spice}, that prunes away redundant feature dimensions. Similar to
\eqref{eq:bound_lasso}, we can bound the associated
prediction divergence.

\begin{note}
The methodology in
\eqref{eq:covariancematch_original} generalizes the approach in
\cite{StoicaEtAl2011_newspectral,StoicaEtAl2011_spice}, and following the
cited references we call
\eqref{eq:spice_predictor} the \textsc{Spice} (\emph{sp}arse \emph{i}terative
\emph{c}ovariance-based \emph{e}stimation)
predictor. Eq. \eqref{eq:coeffmin_spice} can be viewed as a
nonuniformly weighted extension of the square-root \textsc{Lasso}
method in \cite{BelloniEtAl2011_sqrtlasso}. \noteend
\end{note}

\begin{fact}
If all elements \eqref{eq:weight_spice} are positive $(u=0)$ and satisfy
\begin{equation}
\weightsc_j \; \geq \;
\frac{\eps_\star}{\sqrt{n\erisk(\coeff_\star)}}, 
\label{eq:assumption_spice}
\end{equation}
then the divergence of the \textsc{Spice} predictor
$\what{y}(\x)$ from the optimal sparse predictor $\what{y}_\star(\x)$ is bounded by
\begin{equation}
\Delta_\star \; \leq \; \frac{2}{n}\| \weight \odot
\coeff_\star \|^2_1  + 4 \sqrt{\frac{\erisk(\coeff_\star)}{n}} \| \weight \odot
\coeff_\star \|_1.
\label{eq:bound_spice}
\end{equation}
\end{fact}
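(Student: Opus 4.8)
The plan is to mirror the \textsc{Lasso} argument behind \eqref{eq:bound_lasso} (proved in Appendix~\ref{app:bound_lasso}), adapted to the weighted square-root objective \eqref{eq:coeffmin_spice}. Write $\coeffdiff = \what{\coeff} - \coeff_\star$, so that by \eqref{eq:spice_predictor} and the definition of $\what{y}_\star$ the divergence is the prediction seminorm $\Delta_\star = \tfrac1n\|\regM\coeffdiff\|_2^2$. I would introduce the two residual vectors $\epsvec = \y - \regM\coeff_\star$ and $\what{\epsvec} = \y - \regM\what{\coeff}$, for which $\erisk(\coeff_\star) = \tfrac1n\|\epsvec\|_2^2$ and $\erisk(\what{\coeff}) = \tfrac1n\|\what{\epsvec}\|_2^2$. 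The hypothesis $u=0$ makes every weight $\weightsc_j$ in \eqref{eq:weight_spice} strictly positive, which is exactly what lets assumption \eqref{eq:assumption_spice} be applied coordinatewise below.

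The key identity is $\|\regM\coeffdiff\|_2^2 = \langle \regM\coeffdiff, \epsvec\rangle - \langle \regM\coeffdiff, \what{\epsvec}\rangle$, which follows from $\regM\coeffdiff = \epsvec - \what{\epsvec}$. First I would bound the oracle inner product via \eqref{eq:assumption_spice}: since $|\epsvec^\T\regcol_j| \le \eps_\star \le \weightsc_j\sqrt{n\,\erisk(\coeff_\star)}$ for every $j$, a weighted H\"older step gives $\langle \regM\coeffdiff,\epsvec\rangle \le \sqrt{n\,\erisk(\coeff_\star)}\,\|\weight\odot\coeffdiff\|_1$. Second, the subgradient (stationarity) condition for the convex problem \eqref{eq:coeffmin_spice} reads $|\regcol_j^\T\what{\epsvec}| \le \weightsc_j\|\what{\epsvec}\|_2$ for all $j$, so likewise $|\langle \regM\coeffdiff,\what{\epsvec}\rangle| \le \sqrt{n\,\erisk(\what{\coeff})}\,\|\weight\odot\coeffdiff\|_1$. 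Combining the two yields the clean factorised estimate $\Delta_\star \le \tfrac1{\sqrt n}\,(\sqrt{\erisk(\coeff_\star)} + \sqrt{\erisk(\what{\coeff})})\,\|\weight\odot\coeffdiff\|_1$.

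It then remains to control the two factors. For the risk factor, optimality of $\what{\coeff}$ in \eqref{eq:coeffmin_spice} gives $\sqrt{\erisk(\what{\coeff})} \le \sqrt{\erisk(\coeff_\star)} + \tfrac1{\sqrt n}(\|\weight\odot\coeff_\star\|_1 - \|\weight\odot\what{\coeff}\|_1)$, and dropping the nonpositive penalty term bounds $\sqrt{\erisk(\coeff_\star)} + \sqrt{\erisk(\what{\coeff})}$ by $2\sqrt{\erisk(\coeff_\star)} + \tfrac1{\sqrt n}\|\weight\odot\coeff_\star\|_1$. For the penalty factor I would use the triangle inequality $\|\weight\odot\coeffdiff\|_1 \le \|\weight\odot\coeff_\star\|_1 + \|\weight\odot\what{\coeff}\|_1$; provided the fitted penalty does not exceed the oracle one, i.e.\ $\|\weight\odot\what{\coeff}\|_1 \le \|\weight\odot\coeff_\star\|_1$, this factor is at most $2\|\weight\odot\coeff_\star\|_1$. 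Multiplying the two bounds reproduces \eqref{eq:bound_spice} exactly, the $\tfrac{2}{n}\|\weight\odot\coeff_\star\|_1^2$ term arising from the product of the two penalty contributions and the $4\sqrt{\erisk(\coeff_\star)/n}\,\|\weight\odot\coeff_\star\|_1$ term from the risk contribution.

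I expect the genuine obstacle to be precisely the step $\|\weight\odot\what{\coeff}\|_1 \le \|\weight\odot\coeff_\star\|_1$ (equivalently $\erisk(\what{\coeff}) \ge \erisk(\coeff_\star)$). Unlike the plain \textsc{Lasso} of Appendix~\ref{app:bound_lasso}, where the analogous $\|\what{\coeff}\|_1$ cancels algebraically because the objective is linear in the risk, here the square root is nonlinear and the bound multiplies the two factors, so no free cancellation occurs: were the square-root-\textsc{Lasso} fit allowed to overfit the sparse oracle ($\erisk(\what{\coeff}) < \erisk(\coeff_\star)$), the penalty factor would have to be controlled through $\|\weight\odot\what{\coeff}\|_1$ and the estimate would pick up an uncontrolled $\erisk(\coeff_\star)$ term. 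The plan to close this gap is to exploit that \eqref{eq:assumption_spice} is itself a dual-feasibility statement for the oracle --- it asserts $|\regcol_j^\T\epsvec|/\|\epsvec\|_2 \le \weightsc_j$, the very condition $\what{\coeff}$ satisfies with $\what{\epsvec}$ --- so that a sufficiently large regularisation weight prevents the fit from reducing the empirical risk below that of $\coeff_\star$. Making this implication precise, and thereby ruling out the overfitting regime, is the crux; the Cauchy--Schwarz and triangle-inequality bookkeeping above is routine.
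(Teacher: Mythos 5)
Your first two paragraphs are fine and, up to the point you yourself flag, they follow a genuinely different route from the paper: you invoke the stationarity (KKT) condition of the square-root problem \eqref{eq:coeffmin_spice}, $|\regcol_j^\T\what{\epsvec}| \leq \weightsc_j \|\what{\epsvec}\|_2$, to bound $\langle \regM\coeffdiff,\what{\epsvec}\rangle$, and then multiply two factor bounds. The paper never uses KKT at all: it inserts $\what{\coeff}$ and $\coeff_\star$ into the cost (the basic inequality \eqref{eq:spice_inquality}), multiplies by $\erisk^{1/2}+\erisk_\star^{1/2}$ and bounds that factor by $f(\coeff_\star)=2\erisk_\star^{1/2}+\tfrac{1}{\sqrt n}\|\weight\odot\coeff_\star\|_1$, and then plugs the result into the \textsc{Lasso}-style divergence expansion \eqref{eq:div_equality}, keeping everything \emph{per coordinate}: in \eqref{eq:div_equality_spice} the coefficient of each $|\what{w}_j|$ is $-\tfrac{1}{\sqrt n}f(\coeff_\star)\weightsc_j + \tfrac{2\eps_\star}{n}$, which assumption \eqref{eq:assumption_spice} makes nonpositive (since $\tfrac{1}{\sqrt n}f(\coeff_\star)\weightsc_j \geq \tfrac{2}{\sqrt n}\erisk_\star^{1/2}\weightsc_j \geq \tfrac{2\eps_\star}{n}$), so the fitted weights drop out termwise and only $\tfrac{2}{\sqrt n}f(\coeff_\star)\sum_j\weightsc_j|w_{\star,j}|$ survives, which is exactly \eqref{eq:bound_spice}. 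This is the missing idea in your attempt: the paper never needs, and never proves, your lemma $\|\weight\odot\what{\coeff}\|_1 \leq \|\weight\odot\coeff_\star\|_1$; the sign cancellation coordinatewise replaces it.

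The genuine gap is that your plan to close that lemma cannot work as stated. Condition \eqref{eq:assumption_spice} is indeed dual feasibility of $\coeff_\star$ for problem \eqref{eq:coeffmin_spice} -- it says $|\regcol_j^\T\epsvec| \leq \weightsc_j\|\epsvec\|_2$ for all $j$ -- but dual feasibility alone does not make $\coeff_\star$ stationary: optimality additionally requires the equality-with-sign conditions $\regcol_j^\T\epsvec = \weightsc_j\|\epsvec\|_2\,\mathrm{sign}(w_{\star,j})$ on the support of $\coeff_\star$, and nothing supplies these. Since $\coeff_\star$ in \eqref{eq:coeffmin_sparse} is constrained to $k$ nonzeros while \eqref{eq:coeffmin_spice} is unconstrained, the fit can legitimately activate coordinates outside the oracle support and achieve $\erisk(\what{\coeff}) < \erisk(\coeff_\star)$ while \eqref{eq:assumption_spice} holds -- note that with the weights \eqref{eq:weight_spice} the assumption only asks $|\epsvec^\T\regcol_j| \leq \|\regcol_j\|_2\|\epsvec\|_2/\sqrt n$, a factor $\sqrt n$ below the Cauchy--Schwarz ceiling, so it is far from forcing stationarity of $\coeff_\star$. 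Hence the overfitting regime cannot be ruled out, and your multiplicative bound picks up the uncontrolled $\erisk(\coeff_\star)$ term you anticipated. To your credit, you have put your finger on a real subtlety: the paper's own substitution of $f(\coeff_\star)$ for $\erisk^{1/2}+\erisk_\star^{1/2}$ in \eqref{eq:spice_inequalityalt} is itself only licit when $\|\weight\odot\coeff_\star\|_1 \geq \|\weight\odot\what{\coeff}\|_1$, so that regime is tacitly excluded there too; but the operative difference remains that the paper's termwise argument never requires your factor bound $\|\weight\odot\coeffdiff\|_1 \leq 2\|\weight\odot\coeff_\star\|_1$, whereas your factorized estimate stands or falls with the no-overfitting claim, which is unprovable from \eqref{eq:assumption_spice}.
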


\begin{proof}
See Appendix~\ref{app:bound_spice}.
\end{proof}

The above results are valid even when the model
\eqref{eq:function_mean} and \eqref{eq:function_cov} is misspecified.
We have therefore shown a distribution-free property of the
\textsc{Spice} predictor, which will prune away redundant feature dimensions automatically. It parallels \eqref{eq:bound_lasso} but is not
dependent on any hyperparameters. Thus we can view the
covariance-fitting approach as a vehicle for constructing a general \LR{}
predictor with weights on the form \eqref{eq:coeffmin_spice}.

In addition, the \textsc{Spice} predictor has computational properties
that are appealing in cases with large and/or increasing $n$, where an online implemention is desirable.
\begin{fact} The
  \textsc{Spice} predictor \eqref{eq:spice_predictor} can be updated at each new
  data point $(\x_i, y_i)$ in an online manner. The computations are based on
\begin{equation}
\begin{split}
\mbs{\Gamma} \triangleq \regM^\top \regM , \quad \mbs{\rho} \triangleq
\regM^\top \mbf{y} \quad \text{and} \quad \kappa  \triangleq \mbf{y}^\top \mbf{y},
\end{split}
\label{eq:variables_spice}
\end{equation}
which have fixed dimensions and are readily updated sequentially. The
pseudocode for the online learning algorithm is summarized in
Algorithm~\ref{alg:onlinesparse}. It is initialized by setting the
weights to
$\check{\coeff} = \0$. 
\end{fact}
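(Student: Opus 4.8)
The plan is to show that the objective in \eqref{eq:coeffmin_spice} depends on the data $\dataset$ only through the fixed-dimension quantities in \eqref{eq:variables_spice}, and that each of these admits a simple recursive update as a new sample arrives. First I would expand the empirical risk \eqref{eq:risk_empirical} as $\erisk(\coeff) = \tfrac{1}{n}\| \y - \regM\coeff \|^2_2 = \tfrac{1}{n}\bigl( \kappa - 2\coeff^\T \mbs{\rho} + \coeff^\T \mbs{\Gamma} \coeff \bigr)$, so that $\sqrt{\erisk(\coeff)}$ is a function of $(\mbs{\Gamma},\mbs{\rho},\kappa)$ and the counter $n$ alone. Next I would observe that the regularization weights \eqref{eq:weight_spice} satisfy $\weightsc_j^2 = \tfrac{1}{n}\| \regcol_j \|^2_2 = \tfrac{1}{n}[\mbs{\Gamma}]_{jj}$ for $j>u$, so $\weight$ too is determined by the diagonal of $\mbs{\Gamma}$ and $n$. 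Consequently the entire minimization \eqref{eq:coeffmin_spice} can be posed without explicit reference to $\{(\x_i,y_i)\}$; it is parameterized only by the statistics in \eqref{eq:variables_spice}, which have dimensions $p\times p$, $p\times 1$ and $1\times 1$, respectively, \emph{independent} of $n$.

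Second, I would verify that these statistics update by rank-one increments. When a new pair $(\x_{n+1},y_{n+1})$ is observed, we have $\mbs{\Gamma} \mapsto \mbs{\Gamma} + \reg(\x_{n+1})\reg^\T(\x_{n+1})$, $\mbs{\rho} \mapsto \mbs{\rho} + \reg(\x_{n+1}) y_{n+1}$, $\kappa \mapsto \kappa + y_{n+1}^2$, and $n\mapsto n+1$. Each increment costs $O(p^2)$, $O(p)$ and $O(1)$ operations, none of which grows with the number of stored samples; since only $(\mbs{\Gamma},\mbs{\rho},\kappa,n)$ and the current weight vector need be retained, the memory requirement is constant in $n$, and processing the whole stream costs $O(np^2)$, i.e. linear in $n$.

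Third, given the updated statistics, I would solve the convex program \eqref{eq:coeffmin_spice} by cyclic coordinate descent, warm-started from the previous solution (initialized at $\check{\coeff}=\0$). The key calculation is the per-coordinate subproblem: fixing all but the $j$th weight, minimizing $\sqrt{\erisk(\coeff)} + \tfrac{1}{\sqrt n}\|\weight\odot\coeff\|_1$ over the scalar $w_j$ reduces to a one-dimensional problem whose minimizer is available in closed form via a soft-thresholding-type rule, with all required inner products read off from $\mbs{\Gamma}$ and $\mbs{\rho}$. Iterating these updates to convergence yields $\what{\coeff}$, and interleaving the statistic updates with this inner loop gives exactly the procedure of Algorithm~\ref{alg:onlinesparse}. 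The main obstacle is this last step: because of the square root on $\erisk(\coeff)$, the objective is nonsmooth where $\erisk(\coeff)=0$ and the coordinate subproblem is not the standard \textsc{Lasso} soft-threshold; care is needed to derive the correct scalar minimizer (and to handle the degenerate case of a perfectly fitting residual), which is where the bulk of the routine algebra lies.
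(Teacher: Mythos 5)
Your proposal is correct and follows essentially the same route as the paper's proof: reduce \eqref{eq:coeffmin_spice} to the fixed-dimension statistics $\mbs{\Gamma}$, $\mbs{\rho}$, $\kappa$ of \eqref{eq:variables_spice} (noting $\erisk(\coeff)=\tfrac{1}{n}(\kappa-2\coeff^\T\mbs{\rho}+\coeff^\T\mbs{\Gamma}\coeff)$ and $\weightsc_j^2=\Gamma_{jj}/n$), update these by rank-one increments, and run warm-started cyclic coordinate minimization with a closed-form scalar update. The one step you flag but do not carry out --- the non-standard scalar minimizer forced by the square root --- is precisely what the paper supplies, via the split $w_j=s_jr_j$ and the thresholded formula $\hat{r}_j=\gamma_j/\beta_j-\beta_j^{-1}\bigl((\alpha_j\beta_j-\gamma_j^2)/(n-1)\bigr)^{1/2}$ active when $\sqrt{n-1}\,\gamma_j>\sqrt{\alpha_j\beta_j-\gamma_j^2}$ (citing \cite{Zachariah&Stoica2015_onlinespice}), so your diagnosis of where the remaining algebra lies is accurate.
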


\begin{proof}
See Appendix~\ref{app:spice_alg}.
\end{proof}

\begin{fact} The total runtime of the algorithm is
  $\mathcal{O}(nLp^2)$, where $L$ is the number of cycles per sample, and its memory requirement is
  $\mathcal{O}(p^2)$.
\end{fact}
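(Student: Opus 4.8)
The plan is to bound the cost of one outer iteration --- the processing of a single new sample $(\x_i, y_i)$ --- and then multiply by $n$. First I would observe that the entire dependence of the objective \eqref{eq:coeffmin_spice} on the data enters only through the sufficient statistics $\mbs{\Gamma}$, $\mbs{\rho}$ and $\kappa$ of \eqref{eq:variables_spice}, since the empirical risk \eqref{eq:risk_empirical} can be written as $\erisk(\coeff) = n^{-1}(\kappa - 2\coeff^\T\mbs{\rho} + \coeff^\T\mbs{\Gamma}\coeff)$. Upon arrival of $(\x_i, y_i)$ one evaluates the $p$-vector $\reg(\x_i)$ and performs the rank-one update $\mbs{\Gamma} \leftarrow \mbs{\Gamma} + \reg(\x_i)\reg^\T(\x_i)$, the vector update $\mbs{\rho} \leftarrow \mbs{\rho} + y_i\reg(\x_i)$, and the scalar update $\kappa \leftarrow \kappa + y_i^2$. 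The outer product dominates these at $\mathcal{O}(p^2)$ operations per sample, while the vector and scalar updates cost only $\mathcal{O}(p)$ and $\mathcal{O}(1)$.

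After refreshing the statistics, the weights are recomputed by cyclic minimization of \eqref{eq:coeffmin_spice}, performing $L$ sweeps through the $p$ coordinates. The key step is to show that a single coordinate update costs $\mathcal{O}(p)$ rather than $\mathcal{O}(p^2)$. To this end I would maintain the auxiliary $p$-vector $\mbs{\Gamma}\coeff$ alongside the iterate. Minimizing \eqref{eq:coeffmin_spice} over one coordinate $w_j$ with the rest held fixed reduces to a scalar problem whose data-dependent inputs are $[\mbs{\Gamma}\coeff]_j$, the diagonal entry $[\mbs{\Gamma}]_{jj}$ and $[\mbs{\rho}]_j$; these are available in $\mathcal{O}(1)$ once $\mbs{\Gamma}\coeff$ is cached, using $\sum_{k\neq j}[\mbs{\Gamma}]_{jk}w_k = [\mbs{\Gamma}\coeff]_j - [\mbs{\Gamma}]_{jj}w_j$. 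The resulting soft-thresholding-type minimizer admits a closed form, so the coordinate subproblem itself is $\mathcal{O}(1)$. A coordinate change $w_j \to w_j + \Delta$ is then propagated to the cache by the $\mathcal{O}(p)$ operation $\mbs{\Gamma}\coeff \leftarrow \mbs{\Gamma}\coeff + \Delta\,[\mbs{\Gamma}]_j$. Hence each full sweep costs $\mathcal{O}(p^2)$, and the $L$ sweeps cost $\mathcal{O}(Lp^2)$ per sample.

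Combining the two stages, the per-sample cost is $\mathcal{O}(p^2) + \mathcal{O}(Lp^2) = \mathcal{O}(Lp^2)$, which summed over $n$ arrivals yields the claimed total runtime $\mathcal{O}(nLp^2)$. For the memory bound I would note that the only persistent quantities are $\mbs{\Gamma}$, $\mbs{\rho}$, $\kappa$, the weight iterate $\coeff$, the cached vector $\mbs{\Gamma}\coeff$ and the fixed $\weight$; their combined storage is dominated by $\mbs{\Gamma}$ at $\mathcal{O}(p^2)$, and --- crucially --- is independent of $n$ because each raw sample is discarded after being folded into the statistics. I expect the main obstacle to lie in verifying the $\mathcal{O}(p)$ coordinate update: one must confirm that the square-root form of \eqref{eq:coeffmin_spice}, unlike the ordinary \textsc{Lasso}, still admits a closed-form per-coordinate minimizer, and that no quantity requiring a fresh $\mathcal{O}(p^2)$ matrix--vector product has to be recomputed inside the inner loop.
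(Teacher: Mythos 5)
Your proposal is correct and takes essentially the same route as the paper, whose proof simply reads the $\mathcal{O}(Lp^2)$ per-sample cost and the dominant $p \times p$ storage of $\mbs{\Gamma}$ off Algorithm~\ref{alg:onlinesparse}: your cached vector $\mbs{\Gamma}\coeff$ is the paper's $\mbs{\zeta} = \mbs{\rho} - \mbs{\Gamma}\check{\coeff}$ up to an affine change, and the closed-form per-coordinate minimizer whose existence you flag as the main obstacle is confirmed there as \eqref{eq:zi_hat_p} and \eqref{eq:zi_hat_pq}. One minor incompleteness: because of the square-root form, the scalar subproblem also requires the residual norm $\xi = \|\y - \regM\check{\coeff}\|^2$ (your three listed inputs $[\mbs{\Gamma}\coeff]_j$, $\Gamma_{jj}$, $[\mbs{\rho}]_j$ do not suffice), but the algorithm maintains $\xi$ with an $\mathcal{O}(1)$ update per coordinate change, so your complexity conclusion stands unchanged.
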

\begin{proof}
The complexity of the loop in Algorithm~\ref{alg:onlinesparse} is proportional to $Lp^2$ for each sample. 
Regarding the memory, the number of stored variables is constant and the
largest variable is the $p \times p$ matrix $\mbs{\Gamma}$.
\end{proof}

\begin{note} In the given algorithm, the individual weights $\{ w_j
  \}$ are  updated in a fixed cyclic order. Other possible updating orders are
  described in a general context, in
\cite{HongEtAl2016_unified}. As $L \rightarrow \infty$, however, the
algorithm converges to the global minimizer
\eqref{eq:coeffmin_spice} at each sample $n$,
irrespective of the order in which $\{ w_j \}$ are updated
\cite{Zangwill1969_nonlinear,Stoica&Selen2004_cyclic}. \noteend
\end{note}

\begin{note} Let $V_n(\cdot)$ denote the convex cost function in
\eqref{eq:coeffmin_spice} at sample $n$. The termination point of the
cyclic algorithm for $V_{n-1}(\cdot)$ provides the starting point for
minimizing the subsequent cost function $V_{n}(\cdot)$. For 
finite $L$, the termination point will deviate
from the global minimizer \eqref{eq:coeffmin_spice}. In practice,
however, we found that $L$ can be chosen as a small integer and that even
$L=1$ yields good prediction results when $n > p$. Furthermore, we
observed that the results are robust with respect to the orders in
which $\{ w_j \}$ are updated. \noteend
\end{note}

\begin{note} In the special case when the prediction errors of
  $\what{y}_\star(\x)$ are i.i.d. zero-mean Gaussian and the
  regressors are normalized as $\| \regcol_j \|^2_2 \equiv n$, the bound
  \eqref{eq:bound_spice} can be ensured with high probability by
  multiplying the coefficients \eqref{eq:weight_spice} by a factor $2 \sqrt{2 \ln
    p + \delta}$, where $\delta$ is a positive constant. See
  Appendix~\ref{app:bound_spice}. \noteend
\end{note}

\begin{algorithm}
  \caption{: Online learning via covariance fitting} \label{alg:onlinesparse}
\begin{algorithmic}[1]
    \State Input: $\x_n$, $y_n$ and $\check{\coeff}$
    \State $\mbs{\Gamma} := \mbs{\Gamma} + \reg(\x_n) \reg^\top(\x_n)$
    \State $\mbs{\rho} := \mbs{\rho} + \reg(\x_n) y_n$ 
    \State $\kappa := \kappa + y_n^2$
    \State $\xi = \kappa + \check{\coeff}^\top \mbs{\Gamma} \check{\coeff} - 2 
    \check{\coeff}^\top \mbs{\rho} $   
  \State $\mbs{\zeta} = \mbs{\rho} - \mbs{\Gamma} \check{\coeff}$
    \Repeat
        \State $j = 1, \dots, p$
        \State Compute $\hat{w}_j$ using \eqref{eq:zi_hat_p} $(j \leq u)$ otherwise \eqref{eq:zi_hat_pq}
        \State $\xi := \xi  + \Gamma_{jj} (\check{w}_j - \hat{w}_j)^2 + 2 (\check{w}_j - \hat{w}_j) \zeta_j $        
        \State  $\mbs{\zeta}  := \mbs{\zeta} + [\mbs{\Gamma}]_j (\check{w}_j - \hat{w}_j)$ 
        \State $\check{w}_j := \hat{w}_j$
    \Until{ number of iterations equals $L$ }
    \State Output: $\what{\coeff}$
\end{algorithmic}
\end{algorithm}

\subsection{Distribution-free prediction and inference}

In summary, the covariance-fitting methodology above has the following
main attributes:
\begin{itemize}
\item avoids local minima problems in the learning process,
\item results in a predictor $\what{y}(\x)$ that can be implemented online
\item maximum divergence from the optimal sparse predictor
  $\what{y}_\star(\x)$ can be evaluated.
\end{itemize}
Its computational and distribution-free properties also make it possible
to combine this approach with the split conformal method in
\cite{LeiEtAl2016_distribution}, which provides computationally
efficient uncertainty measures for a predictor $\what{y}(\x)$ under
minimal assumptions.

Suppose the input-output data consist of i.i.d. realizations
from an unknown distribution
$$(\x_i, y_i) \sim p(\x, y).$$
For a generic point $\x$, we would like to construct a
confidence interval for the predictor
\eqref{eq:spice_predictor} with a targeted coverage. That is,
find a finite interval
\begin{equation} 
C(\x) \triangleq \bigl [ \what{y}(\x) - \wbar{r} , \:  \what{y}(\x)
 + \wbar{r} \bigr],
\label{eq:interval_splitconformal}
\end{equation}
that covers the predicted output $y$ with a probability that reaches a
prespecified level $\kappa \in (0,1)$.

For simplicity, assume that $n$ is an even number and randomly
split $\dataset$ into two equally-sized datasets $\dataset'$ and
$\dataset''$. For a given targeted coverage level $\kappa$, the split conformal interval is constructed using the
following three steps \cite{LeiEtAl2016_distribution}:
\begin{enumerate}[1)]
\item Train the \textsc{Spice} predictor $\what{y}(\x)$ using $\dataset'$.
\item Predict the outputs in $\dataset''$ and compute the residuals $r_i = |y_i - \what{y}(\x_i)|$
\item Sort the residuals and let $\wbar{r}$ denote the
  $k$th smallest $r_i$, where $k= \lceil (n/2+1) \kappa \rceil$.
\end{enumerate}
\begin{fact}
Setting $\wbar{r}$ as above in \eqref{eq:interval_splitconformal},
yields an interval $C(\x)$ that 
covers the predicted output $y$ with a probability
$$ \Pr \bigl \{ y \in C(\x) \bigr \} \: \geq \: \kappa.$$
Thus the targeted level can be ensured. In addition, when the residuals $\{ r_i \}$ have a continuous distribution, the probability is also bounded from above by $\kappa + \frac{2}{n+2}$.
\end{fact}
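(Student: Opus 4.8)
The plan is to reduce the coverage statement to the classical exchangeability argument for order statistics that underlies split conformal prediction. First I would condition on the training half $\dataset'$, so that the trained predictor $\what{y}(\cdot)$ is a fixed deterministic function. Writing $m = n/2$ for the size of the calibration set $\dataset''$, the calibration residuals $r_i = |y_i - \what{y}(\x_i)|$ for the points in $\dataset''$, together with the residual at the fresh test point $r = |y - \what{y}(\x)|$, are then i.i.d.\ functions of i.i.d.\ draws from $p(\x, y)$, and hence exchangeable. This conditioning is exactly why the data must be split: the predictor has to be independent of both the calibration points and the test point for exchangeability to hold.

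Next I would restate the coverage event. Since $C(\x)$ is the symmetric interval of half-width $\wbar{r}$, we have $y \in C(\x)$ if and only if $r \leq \wbar{r}$, where $\wbar{r} = r_{(k)}$ is the $k$th smallest of the $m$ calibration residuals and $k = \lceil (m+1)\kappa \rceil$. The key step is the order-statistic lemma: among the $m+1$ exchangeable residuals pooled from $\dataset''$ and the test point, the rank of the test residual $r$ is stochastically no larger than a uniform variable on $\{1, \dots, m+1\}$, with equality in distribution when there are no ties. Translating the event $\{r \leq r_{(k)}\}$ into a statement about this rank then yields $\Pr\{r \leq r_{(k)}\} \geq k/(m+1)$.

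Finally I would substitute $k = \lceil (m+1)\kappa \rceil$ and $m = n/2$. The lower bound follows from $\lceil (m+1)\kappa \rceil \geq (m+1)\kappa$, giving $\Pr\{y \in C(\x)\} \geq k/(m+1) \geq \kappa$, which establishes the targeted level. For the upper bound under a continuous residual distribution, ties occur with probability zero, so the rank of $r$ is exactly uniform and $\Pr\{y \in C(\x)\} = k/(m+1)$; combining this with $\lceil (m+1)\kappa \rceil \leq (m+1)\kappa + 1$ gives $\Pr\{y \in C(\x)\} \leq \kappa + 1/(m+1) = \kappa + 2/(n+2)$, as claimed.

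The main obstacle I anticipate is making the rank/order-statistic step fully rigorous: in particular the careful bookkeeping that relates $\{r \leq r_{(k)}\}$ to the rank of $r$ in the pooled sample, and the correct handling of ties, which is precisely what distinguishes the one-sided inequality in the general case from the two-sided bound in the continuous case. Once exchangeability is secured by conditioning on $\dataset'$, the remaining probabilistic content is elementary.
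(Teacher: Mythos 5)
Your proposal is correct. Note that the paper does not prove this fact at all---its ``proof'' is a pointer to \cite[sec.~2.2]{LeiEtAl2016_distribution}---and what you have written is precisely the argument given in that reference: condition on $\dataset'$ so that $\what{y}(\cdot)$ is fixed, observe that the $m = n/2$ calibration residuals together with the test residual are exchangeable (this is exactly the role of the sample split), invoke the order-statistic lemma to get $\Pr\{r \leq r_{(k)}\} \geq k/(m+1)$, and then use $k = \lceil (m+1)\kappa \rceil \geq (m+1)\kappa$ for the lower bound and, under continuity (no ties, so the rank of the test residual is exactly uniform on $\{1,\dots,m+1\}$), $\lceil (m+1)\kappa \rceil \leq (m+1)\kappa + 1$ for the upper bound $\kappa + 1/(m+1) = \kappa + 2/(n+2)$. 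Your anticipated ``obstacle'' is also handled correctly in spirit: with ties, defining the rank as $1 + \#\{i : r_i < r\}$ makes $\{ \mathrm{rank}(r) \leq k \}$ imply $\{ r \leq r_{(k)} \}$, and exchangeability gives $\Pr\{\mathrm{rank}(r) \leq k\} \geq k/(m+1)$, so only the one-sided bound survives in general. The one step you leave implicit but should state is that the bound is first established conditionally on $\dataset'$ and then holds marginally by taking expectation over $\dataset'$ and the random split. In short, your blind proof supplies, correctly and completely, the proof the paper delegates to the cited reference.
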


\begin{proof}
See \cite[sec.~2.2]{LeiEtAl2016_distribution}.
\end{proof}

\begin{note}
Predictive probabilistic models, such as those considered in
\cite{Tipping2001_sblrvm,Wipf&Rao2007_ebayes}, provide credibility
intervals but, in contrast to the proposed approach, lack coverage guarantees. In fact, even when
the said models are correctly specified, their uncertainty is
systematically underestimated after learning the hyperparameters
\cite{WaagbergEtAl2016_prediction}. \noteend
\end{note}

The \textsc{Spice}-based split conformal prediction interval $C(\x)$ 
in \eqref{eq:interval_splitconformal} provides a
computationally efficient, distribution-free prediction and inference
methodology.


\section{Numerical experiments}
\label{sec:experiments}

In this section, we compare the online \textsc{Spice} approach developed
above with the well-established offline $K$-fold cross-validation
approach for learning predictors that use a given regressor function $\reg(\x)$. To
balance the bias and variance of \eqref{eq:learning_kfoldcv}, as well as the associated computational
burden, we
follow the recommended choice of $K=10$ folds
\cite[ch.~7]{HastieEtAl2009_elements}. Finding the global minimizer of
\eqref{eq:learning_kfoldcv} in high dimensions is intractable, and
in the following examples we restrict the discussion to the learning of predictors based on the
scalable Ridge and \textsc{Lasso} regression methods, since these require
only one hyperparameter.

Offline cross-validation was performed by
evaluating \eqref{eq:learning_kfoldcv} on a grid of 10 hyperparameter
values and selecting the
best value. Each evaluation requires retraining the predictor with a
new hyperparameter value and the entire process is computationally
intensive.

For the prediction problems below, we apply regression functions of
the form \eqref{eq:regressor_smooth}: 
$$\reg(\x) = \col\{ 1, \regc(\x) \},$$
using a constant mean $\ux(\x) \equiv 1$. For $\regc(\x)$ we use either
a linear function, $\regc(\x)  = \x$, or the Laplace operator basis
due to its attractive approximation properties
\cite{Solin&Sarkka2014_hilbert}. For the latter choice, suppose $\x$ is $d$-dimensional and belongs to $\mathcal{X} = [-L_1,L_1] \times \cdots \times  [-L_d,L_d]$. Then the elements of $\regc(\x)$ are defined by
\begin{equation}
\covbasis_{k_1, \dots, k_d}(\x) = \prod^{d}_{j=1} \frac{1}{\sqrt{L_j}}\sin\left( \frac{\pi k_j(x_j + L_j)}{2L_j} \right),
\label{eq:laplacebasis}
\end{equation}
where $k_j = 1, \dots, m$ are the indices for dimension $j$. We have that
$\regc(\x) = \col \{  \covbasis_{1, \dots, 1}(\x) \: \cdots  \: \covbasis_{m, \dots,
  m}(\x) \}$ has dimension $q=m^d$. The rectangular domain
$\mathcal{X}$ can easily be translated to any arbitrary point. When
$d$ is large, we may apply the basis to
each dimension $x_j$ separately. Then the resulting $\regc(\x) $ has
dimension $q = md$.

\subsection{Sparse linear regression}

In this experiment we study the learning of predictors under two challenging conditions: heavy-tailed noise and colinear regressors. The input $\x$ is of dimension $d = 100$. The dataset $\dataset$ was generated using an conditional Student-t distribution with mean $$\E[y| \x] = 1 + 5x_{1} + 5x_{10} + 5x_{20} + 5x_{30}
+ 5x_{40}$$ and variance $\Var[y | \x] = 4$. The input $\x$ were generated using an i.i.d. degenerate zero-mean Gaussian variable with covariance matrix $\mbf{C}_x$, where the numerical rank of $\mbf{C}_x \approx d/2 = 50$ and the variances are normalized by setting $\tr\{ \mbf{C}_x \} = d$.

For the predictors, we let $\regc(\x)$ be linear and thus $p=101$. We ran $10^3$ Monte Carlo simulations to
evaluate the performances of the predictors. In the first set of experiments
we estimate the risk $\risk$. To clarify the
comparison between the predictors, the risk is normalized by the noise
variance and presented in decibel scale (dB) in
Table~\ref{tab:exp_sparse_nmse}. As expected for this data generating
process, the sparse predictors outperform Ridge. For \textsc{Spice}
and \textsc{Lasso}, the difference is notable when $n<d$ but is less significant as more
samples are obtained.

Next, we evaluate the inferential properties by repeating the above
experiments with $n=2n'$ samples. The dataset $\dataset$ is randomly
partitioned into two sets $\dataset'$ and $\dataset''$, each of size
$n'$, to produce confidence intervals $C(\x)$ as in
\eqref{eq:interval_splitconformal}. We target the coverage level
 $\kappa = 0.90$, and report the average confidence interval length
as well as average coverage of the interval in
Table~\ref{tab:exp_sparse_coverage}. Note that the probability that $y
\in C(\x)$ is nearly exactly equal to the targeted level $\kappa$
without relying on any distributional assumptions. Thus the reported
confidence intervals are accurate. Furthermore, the average interval
lengths are significantly smaller for the sparse predictors compared
to Ridge. The reported intervals for \textsc{Spice} and \textsc{Lasso}
are similar in length, with the former being slightly smaller. The
interval lengths can be compared to the dynamic range of $y$, which
has a length of approximately 60.

\begin{table}
	\caption{Risk normalized by noise level [dB]}
	\begin{center}
	\begin{tabular}{c|c|c|c}
		$n$ & \textsc{Spice} & Ridge & \textsc{Lasso} \\
		\hline
		50  & $2.54$  & $10.28$  & $2.85$\\
     	100 & $1.07$  & $4.14$   & $1.15$\\
		200 & $0.32$  & $2.73$   & $0.41$
	\end{tabular}
	\end{center}
        \label{tab:exp_sparse_nmse}
\end{table}

\begin{table}
	\caption{Average confidence interval length with target coverage level $\kappa = 0.90$. Average coverage level of interval in parenthesis.}
	\begin{center}
	\begin{tabular}{c|c|c|c}
		$n'$ & \textsc{Spice} & Ridge & \textsc{Lasso} \\
		\hline
		50  & 7.74 (0.90)  &  21.04 (0.90)  &  8.13 (0.90) \\
     	100 & 6.33 (0.90)  &  9.83 (0.90)  &  6.40 (0.90) \\
		200 & 5.48 (0.90)  &  8.02 (0.90)  &  5.56 (0.90)
	\end{tabular}
	\end{center}
        \label{tab:exp_sparse_coverage}
\end{table}

The runtime for the offline learning approach using Ridge or \textsc{Lasso}, is $\mathcal{O}(nKp^2)$ which is similar to the runtime for the online \textsc{Spice} method $\mathcal{O}(nLp^2)$, where $L=3$ in this example. The average runtimes are reported in Table~\ref{tab:exp_sparse_cputimes}. While all three methods scale linearly in $n$, using a cross-validated \textsc{Lasso} predictor is slower in this implementation.

\begin{table}
	\caption{Average runtimes in [s].}
	\begin{center}
	\begin{tabular}{c|c|c|c}
		$n'$ & \textsc{Spice} & Ridge & \textsc{Lasso} \\
		\hline
		50  & 0.85  &  0.93  &  6.01 \\
     	100 & 1.70  &  1.87  &  13.26 \\
		200 & 3.50  &  3.79  &  25.17
	\end{tabular}
	\end{center}
        \label{tab:exp_sparse_cputimes}
\end{table}












\subsection{Global ozone data}

The ozone density determines the transmission of ultraviolet radiation
through the atmosphere which has an important impact on biochemical
processes and health. For this reason, measuring the total column
ozone has been of interest to scientists for decades. In 1978, the
Nimbus-7 polar orbiting satellite was launched, equipped with a total
ozone mapping spectrometer. The satellite was sun
synchronous, and due to the rotation of the Earth, its scan covered
the entire globe in a 24 hour period with a certain spatial
resolution \cite{McPetersEtAl1996_nimbus}. For illustrative purposes,
we consider a set of $n_0 = 173~405$ spatial samples of ozone density $y$ from the
satellite, measured in Dobson units (DU) and recorded on October 1st, 1988,
cf. \cite{Cressie&Johannesson2008_fixed}. The spatial coordinates $\x$
were transformed from longitude and latitude using the area-preserving Mollweide
map projection \cite{Snyder1987_map}.

For $\regc(\x)$, we use the Laplace operator
basis with $m=80$ so that $q = 6400$. The boundaries were set
slightly larger than those given by the Mollweide projection: $L_1=1.15
\cdot 2\sqrt{2}R$ and $L_2=1.15 \cdot\sqrt{2}R$, where $R$ is the radius of the Earth. 

In the first experiment, the
training is performed using $n=n_0$ samples and the ozone density is
predicted on a fine spatial scale. Note that this dataset is nearly
three orders of magnitude larger than that used in the previous example,
which makes it too time consuming to implement the offline cross-validation method due to its computational requirement. Therefore we only evaluate the \textsc{Spice} predictor here.  Fig.~\ref{fig:ozone_gaps} illustrates both the
training samples and the predicted ozone density $\what{y}(\x)$. It can
be seen that the satellite data is not uniformly sampled
and, moreover, it contains significant gaps. The predictions in these
gapped areas appear to interpolate certain nontrivial patterns.
\begin{figure*}
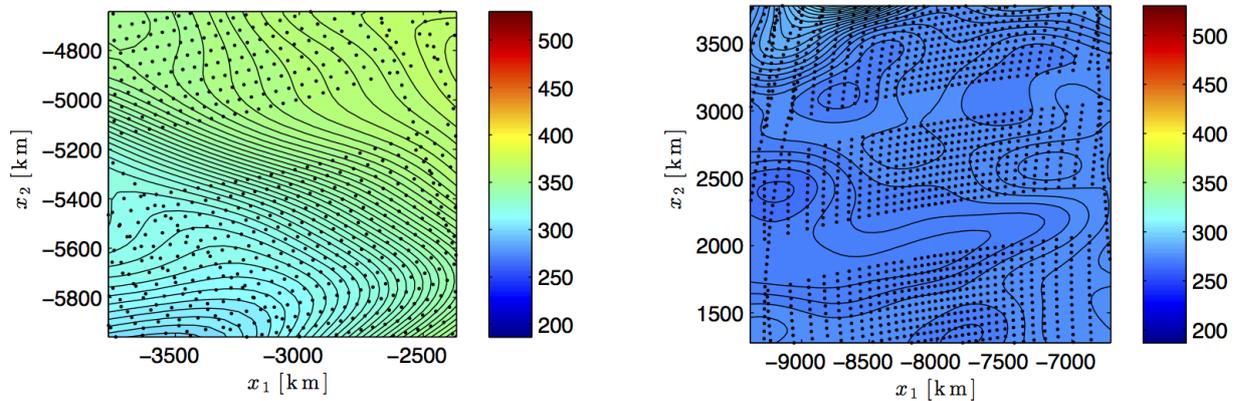

\centering
   \begin{subfigure}[b]{0.49\textwidth}
   \includegraphics[width=1\linewidth]{fig_missing_one.png}
\end{subfigure}
~
\begin{subfigure}[b]{0.49\textwidth}
   \includegraphics[width=1\linewidth]{fig_missing_two.png}
\end{subfigure}
\caption[TEST]{Predicted ozone density $\hat{y}(\x)$ in [DU] and training
  samples (dots) for two  different areas. Note both the irregular and
  gapped sampling pattern.}
\label{fig:ozone_gaps}
\end{figure*}

In the second experiment, we evaluate the inferential properties of
the \textsc{Spice} predictor by learning from a small random subset of the data. We use $n=2n'=17~340$ samples,
or approximately 
$10\%$ of the data. The resulting predictions exhibit discernible patterns as in
Fig.~\ref{fig:ozone_validation}.
\begin{figure*}
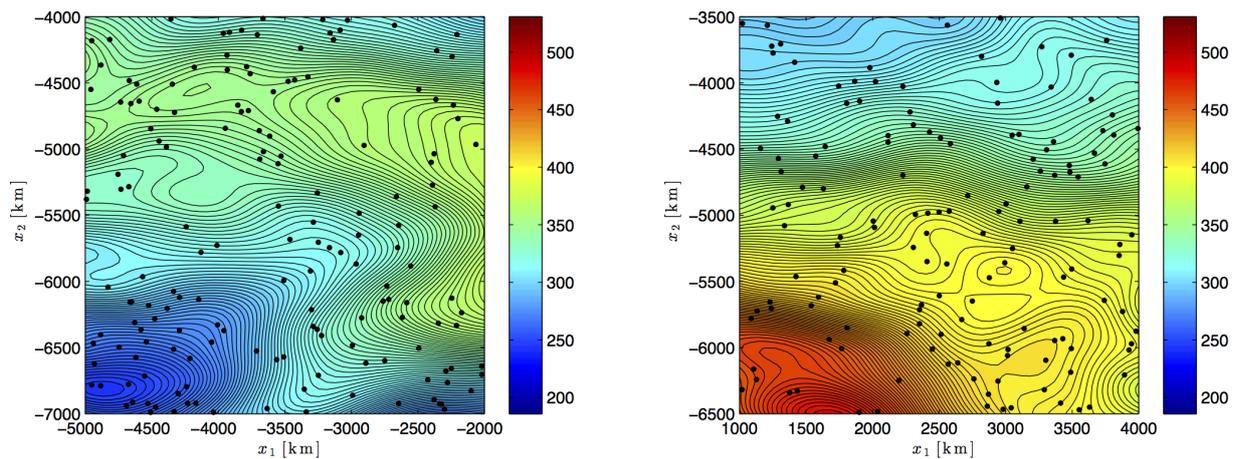

\centering
   \begin{subfigure}[b]{0.49\textwidth}
   \includegraphics[width=1\linewidth]{fig_validate_one.png}
\end{subfigure}
~
\begin{subfigure}[b]{0.49\textwidth}
   \includegraphics[width=1\linewidth]{fig_validate_two.png}
\end{subfigure}
\caption[TEST]{Predicted ozone density $\what{y}(\x)$ in [DU] and training
  samples (dots) for two  different areas. The predictor was trained
  using a randomly selected dataset consisting of $5\%$ of the original data.}
\label{fig:ozone_validation}
\end{figure*}
For a comparison Fig.~\ref{fig:validate_subfig} zooms in on a region with gapped data, also highlighted in Fig.~\ref{fig:ozone_gaps}. Note that the predictions are consistent with the full data case.
\begin{figure}
  \begin{center}
    \includegraphics[width=1.00\columnwidth]{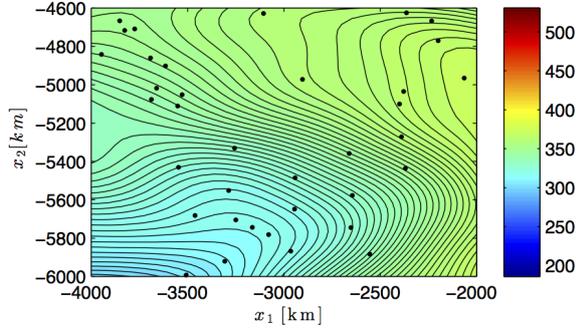}
  \end{center}
  \caption{Predicted ozone density $\hat{y}(\x)$ in [DU] and training
    samples (dots). Here $n \approx 0.05 n_0$ randomly selected training samples are used in contrast with the predictions shown in Fig.~\ref{fig:ozone_gaps}, which uses the full dataset $n=n_0$.}
  \label{fig:validate_subfig}
\end{figure}

The remaining $\bar{n} = n_0 - n =  164~735$ samples are used for validating
the predictor. Its risk is estimated by the out-of-sample prediction errors,
$$\what{\risk} = \Ehat \bigl[|y_i -
  \hat{y}(\x_i) |^2 \bigr],$$
which we translate to Dobson units by taking the square-root. The result was a root-risk of $6.74$ DU. In addition, the confidence interval
$C(\x)$ with $\kappa = 0.90$ had a length of $19.44$ DU and an empirical coverage of 0.90. Both performance metrics compare well with the
dynamic range of the data, which spans [179.40,
542.00] or $362$ DU.  Fig.~\ref{fig:histogram} also shows that the empirical
distribution of the prediction errors is symmetric. These results
illustrate the ability of the proposed method to learn, predict and
infer in real, large-scale datasets.

\begin{figure}
  \begin{center}
    \includegraphics[width=1.00\columnwidth]{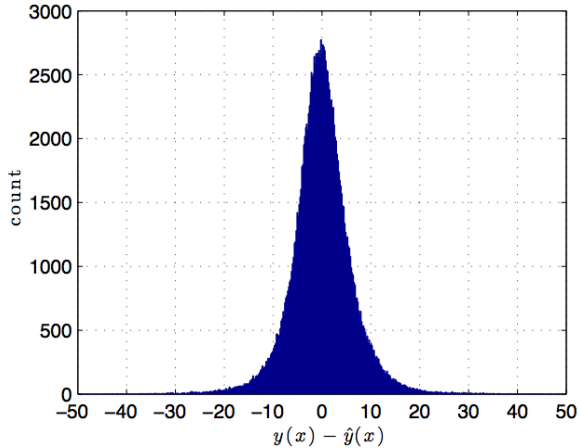}
  \end{center}
  \caption{Histogram of the prediction errors for $\bar{n} = 164\:735$
    samples in [DU]. The predicted samples belong to the interval [179.40,
542.00].}
  \label{fig:histogram}
\end{figure}


\section{Conclusions}

In this paper we considered the problem of online learning for
prediction problems with large and/or streaming data
sets. Starting from a flexible \LC{} predictor, we formulated a convex
covariance-fitting methodology for learning its hyperparameters. The
hyperparameters constrain individual weights of the predictor, similar
to the automatic relevance determination framework.

It was shown that
using the learned hyperparameters results in a predictor with desirable
computational and distribution-free properties. We denote it as the
\textsc{Spice} predictor. It was implemented online with a runtime that
scales
linearly in the number of samples, its memory requirement is
constant, it avoids local minima issues, and prunes away redundant feature dimensions without
relying on assumed properties of the data distributions. In conjunction with the split
conformal approach, it also produces distribution-free prediction confidence
intervals. Finally, the \textsc{Spice} predictor performance was demonstrated on both real
and synthetic datasets, and compared with the offline cross-validation
approach.

In future work, we will investigate input-dependent confidence
intervals, using the locally-weighted split conformal approach.


\small{
\begin{appendices}

\section{Derivation of bounds}

\subsection{\textsc{Lasso} bound \eqref{eq:bound_lasso}}
\label{app:bound_lasso}

We expand the empirical risk of $\what{\coeff}$ by
\begin{equation*}
\begin{split}
\erisk &= \Ehat\left[ | ( y_i - \what{y}_\star(\x_i) )
  - ( \what{y}(\x_i) - \what{y}_\star(\x_i)  ) |^2 \right] \\
&= \erisk_\star + \Delta_\star - 2\Ehat[  \eps_i ( \what{y}(\x_i) -
\what{y}_\star(\x_i) ) ] \\
&= \erisk_\star + \Delta_\star - \frac{2}{n} \epsvec^\T \regM (\what{\coeff}
- \coeff_\star).
\end{split}
\end{equation*}
Using Hölder's inequality along with the triangle inequality yields
\begin{equation*}
\begin{split}
\epsvec^\T \regM (\what{\coeff}
- \coeff_\star) &\leq | \epsvec^\T \regM (\what{\coeff}
- \coeff_\star) | \leq \| \regM^\T \epsvec \|_\infty \| \what{\coeff}
- \coeff_\star\|_1 \\
&\leq \eps_\star ( \| \what{\coeff} \|_1 + \| \coeff_\star \|_1 ).
\end{split}
\end{equation*}
Thus, the prediction divergence is bounded by
\begin{equation}
\Delta_\star \leq \erisk  - \erisk_\star + \frac{2\eps_\star}{n} ( \| \what{\coeff} \|_1 + \| \coeff_\star \|_1 ).
\label{eq:div_equality}
\end{equation}

Next, by inserting $\what{\coeff}$ and $\coeff_\star$ into the cost
function of \eqref{eq:coeffmin_lasso}, we obtain 
\begin{equation}
\erisk  - \erisk_\star \leq \covpsc\left( \| \coeff_\star
  \|_1 - \|\what{\coeff} \|_1 \right) 
\label{eq:risk_inequality}
\end{equation}
Applying this inequality along with \eqref{eq:assumption_lasso} to
\eqref{eq:div_equality}, gives
\begin{equation}
\begin{split}
\Delta_\star &\leq \covpsc\left( \| \coeff_\star
  \|_1 - \|\what{\coeff} \|_1 \right)  + \covpsc ( \|
\what{\coeff} \|_1 + \| \coeff_\star \|_1 ) \\
&= 2\covpsc\| \coeff_\star \|_1.
\end{split}
\end{equation}
Cf. \cite{Buhlmann&VanDeGeer2011_highdim} for the case when the unknown data generating mechanism belongs to a sparse linear model class.

\subsection{\textsc{Spice} bound \eqref{eq:bound_spice}}
\label{app:bound_spice}

For notational simplicity, we write 
 $$g(\coeff) = \| \weight \odot \coeff \|_1 = \sum^p_{j=1} \weightsc_j |w_j|,$$ since $u=0$. By inserting
 $\what{\coeff}$ and $\coeff_\star$ into the cost
function of \eqref{eq:coeffmin_spice}, we have that
\begin{equation}
\begin{split}
\erisk^{1/2}  - \erisk^{1/2}_\star  &\leq \frac{1}{\sqrt{n}}\left( g(\coeff_\star) -
g(\what{\coeff}) \right) \leq  \frac{1}{\sqrt{n}} g(\coeff_\star).
\end{split}
\label{eq:spice_inquality}
\end{equation}
Multipling the inequality by $\erisk^{1/2}  + \erisk^{1/2}_\star $ and rearranging,
yields
\begin{equation}
\begin{split}
\erisk  - \erisk_\star &\leq   \frac{1}{\sqrt{n}}\left(\erisk^{1/2}  +
  \erisk^{1/2}_\star\right) ( g(\coeff_\star) - g(\what{\coeff}) ) \\
&\leq \frac{1}{\sqrt{n}} \underbrace{\left( 2\erisk^{1/2}_\star  + \frac{1}{\sqrt{n}} 
g(\coeff_\star)  \right)}_{\triangleq f(\coeff_\star)}  (
g(\coeff_\star) - g(\what{\coeff}) ) , \\
&= \sum_{j} \frac{1}{\sqrt{n}} f(\coeff_\star) \weightsc_j (|w_{\star,j}| - |\what{w}_j|)
\end{split}
\label{eq:spice_inequalityalt}
\end{equation}
where the second inequality follows from using $\erisk^{1/2}  \leq
\erisk^{1/2}_\star + 
\frac{1}{\sqrt{n}} g(\coeff_\star)$ in \eqref{eq:spice_inquality}. 

Inserting \eqref{eq:spice_inequalityalt} into \eqref{eq:div_equality}, yields
\begin{equation}
\begin{split}
\Delta_\star \; &\leq \;  \sum_{j} \frac{1}{\sqrt{n}} f(\coeff_\star)
\weightsc_j (|w_{\star,j}| - |\what{w}_j|) +
\frac{2\eps_\star}{n}(|w_{\star,j}| + |\what{w}_j|) .
\label{eq:div_equality_spice}
\end{split}
\end{equation}
Given \eqref{eq:assumption_spice}, we have that
\begin{equation*}
\begin{split}
\weightsc_j &\geq \frac{\eps_\star }{ \sqrt{n\erisk_\star}}
\geq\frac{2\eps_\star }{\sqrt{n}(2\erisk^{1/2}_\star + \frac{1}{\sqrt{n}} g(\coeff_\star))} =
\frac{2\eps_\star}{\sqrt{n}f(\coeff_\star)}
\end{split}
\end{equation*}
By re-arranging and dividing by $n$, we obtain the following inequality
\begin{equation*}
\begin{split}
\frac{1}{\sqrt{n}} f(\coeff_\star) \weightsc_j &\geq
\frac{2\eps_\star}{n}, \quad \forall j > u.
\end{split}
\end{equation*}
Therefore \eqref{eq:div_equality_spice} can be bounded by
\begin{equation*}
\begin{split}
\Delta_\star \; &\leq \; \sum_{j} \frac{2}{\sqrt{n}} f(\coeff_\star)
\weightsc_j |w_{\star,j}| \\
&= \frac{2}{\sqrt{n}} \left( 2\erisk^{1/2}_\star  + \frac{1}{\sqrt{n}} 
g(\coeff_\star)  \right) g(\coeff_\star) \\
&=4 \sqrt{\frac{\erisk_\star}{n}} g(\coeff_\star)  + \frac{2}{n} g^2(\coeff_\star)
\end{split}
\end{equation*}

Note that in the special case when the prediction errors of $\what{y}_\star(\x)$ are i.i.d. $\eps_i \sim \mathcal{N}(0, \sigma^2)$, and the
regressors are normalized as $\| \regcol_j \|^2_2 \equiv n$, we have that
\begin{equation}
\sqrt{\sigma^2(2 \ln p +\delta)} \geq \frac{\eps_\star}{\sqrt{n}},
\label{eq:eps_max_gaussian}
\end{equation}
with probability greater than $1 - 2 \exp(-\delta/2)$ where $\delta$ is a positive constant, cf. \cite[ch.~6.2]{Buhlmann&VanDeGeer2011_highdim}. 
In this setting $\erisk_\star$ is a consistent estimate of
$\sigma^2$. Therefore \eqref{eq:assumption_spice} is satisfied with
high probability in this case if the elements of
$\weight$ are multiplied by a factor $c\sqrt{2\ln p
 + \delta}$, so that
$$\weightsc_j = \frac{1}{\sqrt{n}} \| \regcol_j \|_2 \times  c\sqrt{2\ln p
 + \delta}, $$ 
for some $c >1$.

 \section{Optimal weights \eqref{eq:optimallin}}\label{app:optimalfilter}

The conditional risk can be decomposed as
\begin{equation}\label{eq:MSE}
\begin{split}
  \risk(\lin|\x) &= \E[|y - \what{y}(\x)|^2 | \X, \x] \\
&= \Var[y  - \what{y}(\x) | \X, \x ] + (\text{Bias}[\what{y}(\x)])^2 \\
&= \Var[y | \x] + \Var[\what{y}(\x) | \X]  - 2\Cov[y,\what{y}(\x) | \X, \x]] \\
&= K + \Var[\lin^\top \y | \X] - 2\Cov[y,\lin^\top \y |  \X, \x] \\
&= K + \lin^\top \ycov \lin - 2\lin^\top \ycross
\end{split}
\end{equation}
using \eqref{eq:statisticalmoments_cov}. Here $K$ is a constant and
the bias vanishes due to the unbiasedness constraint on $\lin$.  This
constraint, in turn, can be expressed as
$$(\ux^\T(\x)  - \lin^\T \U ) \coeff_0 = 0, \quad \forall \coeff_0,$$
or equivalently
$$\U^\top \lin = \ux(\x).$$
Note that a weak assumption in our case is $\ux(\x) \in \mathcal{R}(\U^\top)$,
especially when $n > u$ and $\ux(\x) \equiv 1$. Hence there exist $\lin$ that satisfy the equality above.

Thus the problem \eqref{eq:BLUP_problem} can be
reformulated using the convex Lagrangian
\begin{equation}\label{eq:BLUP_lagrange}
\begin{split}
L(\lin, \lag) = \lin^\top \ycov \lin - 2\lin^\top \ycross
+ 2(  \U^\top \lin - \ux(\x) )^\top \lag,
\end{split}
\end{equation}
where $\lag$ is the $u \times 1$ vector of Lagrange multipliers.

A stationary point of the Lagrangian in \eqref{eq:BLUP_lagrange} with respect to $\lin$ and
$\lag$ satisfies 
\begin{equation*}
\begin{cases}
\ycov \lin - \ycross + \U \lag &= \mbf{0}, \\
\U^\top \lin - \ux(\x)&= \mbf{0}. 
\end{cases}
\end{equation*}
The first equality can be expressed as $\lin = \ycov^{-1}( \ycross - \U \lag)
$. We
can insert the first equality into the second and solve for the Lagrange multipliers 
\begin{equation*}
\lag = -(\U^\top \ycov^{-1} \U)^-( \ux(\x) - \U^\top \ycov^{-1} \ycross ).
\end{equation*}
Then we have
\begin{equation*}
\begin{split}
\lin &= \ycov^{-1}\ycross + \ycov^{-1}\U (\U^\top \ycov^{-1} \U)^-
\ux(\x) \\
&\quad - \ycov^{-1}\U (\U^\top \ycov^{-1} \U)^- \U^\top \ycov^{-1}\ycross.
\end{split}
\end{equation*}
Noting that the orthogonal projector is given by $\proj^{\perp} =
\mbf{I} - \U (\U^\top \ycov^{-1} \U)^-\U^\top \ycov^{-1} $ concludes
the proof.

 \section{Linear regression form \eqref{eq:coeffmin_smooth}}\label{app:equivalence}
 
Let $\reg(\x)$ be partitioned as in \eqref{eq:regressor_smooth}. Then 
$$\what{y}(\x) = \lin^\T(\x) \y =\reg^\T(\x) \what{\coeff}$$
in combination with \eqref{eq:optimallin} allows us to identify the weights:
\begin{equation}
\what{\coeff} = \begin{bmatrix} \what{\coeff}_0 \\
  \what{\coeff}_1 \end{bmatrix} =  \begin{bmatrix}  (\U^\top \ycov^{-1} \U)^\dagger \U^\T  \ycov^{-1}  \y  \\
  \Lam \regcM^\T \ycov^{-1} (\y - \U \what{\coeff}_0 ) \end{bmatrix}.
\label{eq:app_equivalentform}
\end{equation}

Next, define the problem
\begin{equation}
\min_{\coeff_0, \: \coeff_1, } \;  \covpsc^{-1}_0\| \y -
\U \coeff_0 - \regcM \coeff_1  \|^2_2 +\| \coeff_1 \|^2_{\Lam^{-1}}
\label{eq:covmatch_equiv}
\end{equation}
for which the minimizing $\coeff_1$, namely,
\begin{equation*}
\begin{split}
\what{\coeff}_1 &= \covpsc^{-1}_0( \covpsc^{-1}_0\regcM \regcM^\top + \Lam^{-1} )^{-1} \regcM^\top (\y - \U \nusn) \\
&= \covpsc^{-1}_0(\Lam - \Lam \regcM^\top \ycov^{-1} \regcM \Lam  ) \regcM^{\top}( \y  - \U \nusn  ) \\
&= \covpsc^{-1}_0\Lam \regcM^\top \ycov^{-1}  ( \ycov -  \regcM \Lam \regcM^{\top})( \y  - \U \nusn  ) \\
&= \Lam\regcM^\top \ycov^{-1}  ( \y  - \U \nusn  ),
\end{split}
\end{equation*}
has the same form as in
\eqref{eq:app_equivalentform} with $\coeff_0$ still to be determined. In the above calculation we made use of
the matrix inversion lemma. Inserting $\what{\coeff}_1$ back into
\eqref{eq:covmatch_equiv} yields the concentrated cost function
\begin{equation*}
\begin{split}
& \covpsc^{-1}_0\| \y - \U \coeff_0 - \regcM \what{\coeff}_1   \|^2_2 +\|\what{\coeff}_1 \|^2_{\Lam^{-1}} \\
=&\covpsc^{-1}_0\| (\mbf{I}_N - \regcM \Lam \regcM^\top  \ycov^{-1})( \y - \U  \coeff_0)  \|^2_2 \\
&\quad +\| \Lam\regcM^\top \ycov^{-1}  ( \y  - \U \coeff_0 )
\|^2_{\Lam^{-1}} \\
=& \covpsc_0 ( \y  - \U  \coeff_0 )^\top \ycov^{-1} \ycov^{-1}  ( \y  - \U  \coeff_0 )\\
&\quad +  ( \y  - \U  \coeff_0 )^\top \ycov^{-1} \regcM \Lam  \regcM^\top
\ycov^{-1} ( \y  - \U  \coeff_0 )  \\
=& ( \y  - \U  \coeff_0 )^\top \ycov^{-1} (\y - \U  \coeff_0 ) .
\end{split}
\end{equation*}
Minimizing with respect to $\coeff_0$ yields $\what{\coeff}_0$ in
\eqref{eq:app_equivalentform}. Finally, multiplying the cost
function \eqref{eq:covmatch_equiv} by $\covpsc_0 / n$ yield the desired form.

\section{Convexity of learning problem \eqref{eq:covariancematch_original} } \label{app:convexity}

By expanding the criterion \eqref{eq:covariancematch_original} we obtain
\begin{equation}
\wtilde{\y}^\T \ycov^{-1} \wtilde{\y} + \tr\{ \ycov \}
\label{eq:eq:covariancematch_original_proof}
\end{equation}
where $\wtilde{\y} = \y - \U \coeff_0$.

We note that $\ycov$ in \eqref{eq:statisticalmoments_cov} is a linear function of $\covp$. Thus the normalization constraint \eqref{eq:covariancematch_original_constraint} can be written as a linear equality constraint
\begin{equation}
 n \covpsc_0  + \sum^q_{j=1}  \|\regcol_j\|^2 \covpsc_j = \rho
\label{eq:covariancematch_original_constraint1}
\end{equation}
Next, we define an auxiliary variable $\alpha$ that satisfies
$$\alpha \geq \wtilde{\y}^\T \ycov^{-1} \wtilde{\y},$$
or equivalently 
\begin{equation}
\begin{bmatrix}
\alpha & \wtilde{\y}^\T \\
\wtilde{\y} & \ycov
\end{bmatrix} \succeq \0.
\label{eq:covariancematch_original_constraint2}
\end{equation}

Using the auxiliary variable and the definition of $\ycov$, we can therefore reformulate the learning problem
$$\min_{\alpha, \: \covp} \; \alpha + n \covpsc_0  + \sum^q_{j=1}  \|\regcol_j\|^2 \covpsc_j,$$
which has a linear cost function and is subject to the constraints \eqref{eq:covariancematch_original_constraint1} and \eqref{eq:covariancematch_original_constraint2}. This problem is recognized as a semidefinite program, i.e. it is convex. See \cite{LoboEtAl1998_applications} and \cite{StoicaEtAl2011_spice}.

\section{\textsc{Spice}
  predictor \eqref{eq:spice_predictor}}\label{app:spice_predictor}

The derivation follows in three steps. First we show that dropping the constraint \eqref{eq:covariancematch_original_constraint} yields 
the simpler unconstrained problem in
\eqref{eq:covariancematch_original}. Next, we prove that the fitted hyperparameters in the simplified problem yields the same predictor. Finally, we show how the corresponding \LR{} form arises as a consequence.

By dropping the constraint
\eqref{eq:covariancematch_original_constraint} we may consider the
problem in the following form:
\begin{equation}
\what{\covp} = \argmin_{\covp} \: \tr\{\scov \ycov^{-1}\} +
\tr\{\ycov\},
\label{eq:covariancematch}
\end{equation} 
where $\scov$ is the unnormalized sample covariance matrix.

We now prove that, $\covp^\star \propto \what{\covp}$. Begin by defining a constant $\kappa > 0$, such that $\text{tr}\{\scov \ycov^{-1}(\what{\covp})\} = \kappa^2 \text{tr}\{ \ycov(\what{\covp})\}$ at the minimum of \eqref{eq:covariancematch}. We show that $\kappa=1$ is the only possible value and so both terms in \eqref{eq:covariancematch} equal each other at the minimum.

Let $\tilde{\covp} = \kappa \what{\covp}$, and observe that the cost \eqref{eq:covariancematch} is then bounded by
\begin{equation*}
\begin{split}
(\kappa^2+1) \text{tr}\{ \ycov(\what{\covp})\} &\leq \text{tr}\{\scov \ycov^{-1}(\tilde{\covp} )\} + \text{tr}\{\ycov(\tilde{\covp} )\} \\
&= \kappa^{-1}\text{tr}\{\scov \ycov^{-1}(\what{\covp} )\} +\kappa\text{tr}\{ \ycov(\what{\covp})\}\\
&= 2\kappa \text{tr}\{ \ycov(\what{\covp})\}.
\end{split}
\end{equation*}
Thus $\kappa$ must satisfy $\kappa^2+1 \leq 2 \kappa$, or $(\kappa-1)^2 \leq 0$. Therefore $\kappa= 1$ is the only solution and both terms must be equal at the minimum. We can thus re-write the minimization of \eqref{eq:covariancematch} as the following problem
\begin{equation}\label{eq:covariancematch_alt1}
\begin{split}
\min & \quad \alpha \\
\text{subject to} & \quad \text{tr}\{\scov \ycov^{-1}\} = \alpha, \; \text{tr}\{ \ycov \} = \alpha,
\end{split}
\end{equation}
with minimizer $\hat{\covp}$ and where $\alpha > 0$ is an auxiliary variable.

Next, consider an equivalent problem to \eqref{eq:covariancematch_alt1} obtained by re-defining the variables as $\tilde{\covp} =  \rho \alpha^{-1}\covp$. Then $\text{tr}\{\scov \ycov^{-1}(\covp)\} = \rho \alpha^{-1} \text{tr}\{\scov \ycov^{-1}(\tilde{\covp})\}$ and $\text{tr}\{ \ycov(\covp) \} = \alpha \rho^{-1} \text{tr}\{ \ycov(\tilde{\covp}) \}$, so that the equivalent problem becomes
\begin{equation}\label{eq:covariancematch_alt2}
\begin{split}
\min & \quad \beta \\
\text{subject to} & \quad \text{tr}\{\scov \ycov^{-1}\} = \beta, \; \text{tr}\{ \ycov \} = \rho,
\end{split}
\end{equation}
where $\beta = \alpha^2 \rho^{-1}$. The minimizer of the equivalent
problem \eqref{eq:covariancematch_alt2} is therefore $\tilde{\covp}
\propto \what{\covp}$. Problem \eqref{eq:covariancematch_alt2} is
however identical to the constrained problem 
\begin{equation*}
\begin{split}
\min & \quad \text{tr}\{\scov \ycov^{-1}\}  \\
\text{subject to} & \quad \text{tr}\{ \ycov \} = \rho,
\end{split}
\end{equation*}
 whose minimizer is $\tilde{\covp} = \covp^\star$, which follows from expanding the cost in \eqref{eq:covariancematch_original} and the constraint \eqref{eq:covariancematch_original_constraint}.

Thus we proved that $\covp^\star \propto \what{\covp}$. Next, note that the
optimal \LC{} predictor based on \eqref{eq:optimallin} is invariant to uniform scaling of
$\covp$. That is, $\what{y}(\x; \covp) = \what{y}(\x;
c\covp)$ for all $c > 0$.  The result follows readily by inspection of the minimizer \eqref{eq:coeffmin_smooth}, given in \eqref{eq:app_equivalentform}. Therefore $$\what{y}(\x; \covp^\star) = \what{y}(\x ; \what{\covp}).$$

Finally, consider the following augmented problem
\begin{equation}
\min_{\coeff_0, \: \coeff_1, \: \covp} \;  \covpsc^{-1}_0\| \y -
\U \coeff_0 - \regcM \coeff_1  \|^2_2 +\| \coeff_1 \|^2_{\Lam^{-1}} + \text{tr}\{
\ycov \}.
\label{eq:covmatch_aug}
\end{equation}
Solving for $\coeff_0$ and $\coeff_1$ yields the minimizer
\eqref{eq:coeffmin_smooth}, i.e.,
\eqref{eq:app_equivalentform}. Moreover, by inserting the minimizing $\coeff_1$ back into
\eqref{eq:covmatch_aug} we obtain the concentrated cost function
\begin{equation*}
\begin{split}
&\covpsc^{-1}_0\| (\mbf{I}_n - \regcM \Lam \regcM^\top  \ycov^{-1})( \y - \U \nusn)  \|^2_2 \\
&\quad +\| \Lam\regcM^\top \ycov^{-1}  ( \y  - \U \nusn ) \|^2_{\Lam^{-1}} + \text{tr}\{ \ycov \}\\
=& \covpsc_0 ( \y  - \U \nusn )^\top \ycov^{-1} \ycov^{-1}  ( \y  - \U \nusn )\\
&\quad +  ( \y  - \U \nusn )^\top \ycov^{-1} \regcM \Lam  \regcM^\top
\ycov^{-1} ( \y  - \U \nusn ) + \text{tr}\{ \ycov \} \\
=& \text{tr}\{( \y - \U \nusn ) ( \y  - \U \nusn )^\top \ycov^{-1} \} + \text{tr}\{
\ycov \}
\end{split}
\end{equation*}
which is equal to that in \eqref{eq:covariancematch}. Thus the
augmented problem \eqref{eq:covmatch_aug} enables us to obtain both
$\what{\covp}$ and $\what{\coeff}$. 

Using the result above, we may alternatively solve for $\covp$ first. The second and third terms in \eqref{eq:covariancematch} can be written as
\begin{equation*}
\| \coeff_1 \|^2_{\Lam^{-1}} =  \sum^q_{k=1}
\frac{1}{\covpsc_k}w^2_{u+k}
\end{equation*} 
and
\begin{equation*}
\begin{split} 
\text{tr}\{ \ycov \} &= \text{tr}\{ \regcM^\top \regcM \Lam \}  + \tr\{ \covpsc_0 \I_n \} \\
&= \sum^q_{k=1} \| \regcol_{u+k} \|^2_2 \covpsc_k + n \covpsc_0,
\end{split}
\end{equation*}
respectively. Then the minimizing hyperparameters $\covp$ in \eqref{eq:covmatch_aug} can be expressed in closed-form:
\begin{equation*}
\hat{\covpsc}_k = \begin{cases} \|   \y - \regM \z\|_2/\sqrt{n}, \quad k
  = 0.\\
|w_{u+k}|/\| \regcol_{u+k} \|_2, \quad k = 1, \dots, q.\end{cases} 
\end{equation*}
Inserting the expression back in to \eqref{eq:covmatch_aug} 
yields a concentrated cost function
$$\sqrt{\| \y - \regM \coeff \|^2_2} + \sum^p_{j=u+1} \frac{1}{\sqrt{n}}\| \regcol_j \|_2 |w_j|$$
which, after dividing by $n^{-1/2}$, equals that in
\eqref{eq:coeffmin_spice}. Thus the right hand side of
\eqref{eq:spice_predictor} yields $\what{y}(\x ; \what{\covp})$ when using $\what{\covp}$
from \eqref{eq:covariancematch}.

\section{Online algorithm}
\label{app:spice_alg}

\begin{proof}
We reformulate the convex problem in \eqref{eq:coeffmin_spice} at a
given sample size $n$, using \eqref{eq:variables_spice}. We solve the
problem in
\eqref{eq:coeffmin_spice} via cyclic minimization
\cite{Zachariah&Stoica2015_onlinespice}. That is, we minimize the cost
function, with respect to one variable $w_j$ at a time, while holding
the remaining variables $\{ w_k \}_{k\neq j}$ are held constant \cite{Stoica&Selen2004_cyclic}. 

Recall that $\regcol_j$ is the
$j$th column of $\regM$ and $$\wbar{\y}_j = \mbf{y} -
\sum_{k\neq j} \regcol_k \check{w}_k,$$ where $\check{w}_k$ is the
current estimate of $w_k$. (When $n = 0$, the initial estimate
$\check{w}_j$ is set to 0.) Starting from the cost in
\eqref{eq:coeffmin_spice}, we define an equivalent cost function with
respect to $w_j$:
\begin{equation}
V(w_j) \triangleq \| \wbar{\y}_j  - \regcol_j w_j \|_2 + \weightsc_{j}
|w_j|,
\label{eq:subproblem}
\end{equation}
where 
\begin{equation*}
\weightsc_j =
\begin{cases}
0, \quad  \quad j \leq u\\
 \| \regcol_j \|_2 / \sqrt{n}, \quad  j > u\\
\end{cases}.
\end{equation*}
Now consider two cases:

\emph{Case 1)} when $j=1,\dots, u$: Then $\weightsc_j = 0$ and \eqref{eq:subproblem} can be written as
\begin{equation}\label{eq:Vi_p}
V(w_j) = ( \| \wbar{\y}_j \|^2 + \|\regcol_j\|^2 w^2_j - 2\regcol^\top_j \wbar{\y}_j  w_j  )^{1/2}.
\end{equation}
The minimizer of \eqref{eq:Vi_p} is readily found as $\hat{w}_j = \regcol^\top_j\wbar{\y}_j/\|\regcol_j\|^2$ and 
we get a nonnegative expression $ (\| \regcol_j \|^2 \| \wbar{\y}_j
\|^2 - ( \regcol^\top_j \wbar{\y}_j)^2 )/\| \regcol_j \|^2 \geq 0$ inside the brackets of
$V(\hat{w}_j)$ 
using the Cauchy-Schwartz inequality. Noting that $\wbar{\y}_j = \y - \regM \check{\coeff} +  \regcol_j
\check{w}_j$, we can express the minimizer as
\begin{equation}\label{eq:zi_hat_p}
\begin{split}
\hat{w}_j &= \frac{\regcol^\top_j( \y - \regM \check{\coeff} +  \regcol_j
  \check{w}_j)}{  \|\regcol_j \|^2 } = \frac{\zeta_j + \Gamma_{jj} \check{w}_j}{\Gamma_{jj}},
\end{split}
\end{equation}
where we have defined the vector $$\mbs{\zeta} \triangleq \mbs{\rho} - \mbs{\Gamma} \check{\mbs{\z}}.$$

\emph{Case 2)} when $j=u+1,\dots, p$: We parameterize the
variable $w_j$ as $w_j = s_i r_j$, where $r_j \geq 0$ and $s_j \in \{ -1,1 \}$. Then \eqref{eq:subproblem} becomes
\begin{equation*}
\begin{split}
V(r_j,s_j) &= ( \| \bar{\y}_j  \|^2 + \|\regcol_j\|^2 r^2_j  - 2
\regcol^\top_j \wbar{\y}_j s_j r_j )^{1/2}  + \weightsc_j r_j,
\end{split}
\end{equation*}
The minimizing $s_j$ is given by 
\begin{equation}\label{eq:scalars_phi}
\begin{split}
\hat{s}_j &= \text{sign}(\regcol^\top_j \wbar{\y}_j )\\
&=  \text{sign}(\regcol^\top_j (  \y - \regM \check{\coeff} +  \regcol_j
\check{w}_j ) ) \\
&= \text{sign}(\zeta_j + \Gamma_{jj} \check{w}_j) .
\end{split}
\end{equation}
To write the minimizing $r_i$ in a compact manner, we introduce the
following variables
\begin{equation*}
\begin{split}
\xi &\triangleq \|  \y - \regM \check{\coeff} \|^2 \\
&=\kappa + \check{\coeff}^\top \mbs{\Gamma} \check{\coeff} - 2 \check{\coeff}^\top
\mbs{\rho} 
\end{split}
\end{equation*}
and
\begin{equation}\label{eq:scalars_abc}
\begin{split}
\alpha_j &\triangleq  \|  \wbar{\y}_j  \|^2  \\
&= \|  \y - \regM \check{\coeff} +  \regcol_j
\check{w}_j \|^2  \\
&=\xi + \Gamma_{jj}\check{z}^2_j + 2\check{w}_j \zeta_j, \\
\beta_j  &\triangleq \| \regcol_j \|^2 \\
&=\Gamma_{jj},\\
\gamma_j &\triangleq |\regcol^\top_j  \wbar{\y}_j | \\
&= |\regcol^\top_j  ( \y - \regM \check{\coeff} + \regcol_j \check{w}_j) | \\
&= |\zeta_j + \Gamma_{jj} \check{w}_j |. \\
\end{split}
\end{equation}
The use of this notation enables us to express the concentrated cost function as
\begin{equation}\label{eq:Vi_q}
\begin{split}
V(r_j,\hat{s}_j) &= ( \alpha_j + \beta_j r^2_j - 2
\gamma_j r_j )^{1/2} + \weightsc_j r_j.
\end{split}
\end{equation}
It was shown in \cite[sec.~III]{Zachariah&Stoica2015_onlinespice} that the minimizer of \eqref{eq:Vi_q} is
\begin{equation}
\begin{split}
\hat{r}_j &= \frac{\gamma_j}{\beta_j} - \frac{1}{\beta_j} \left( \frac{ \alpha_j \beta_j - \gamma^2_j}{n-1} \right)^{1/2} ,
\end{split}
\label{eq:scalars_r}
\end{equation}
when $\sqrt{n-1} \gamma_j > \sqrt{ \alpha_j \beta_j - \gamma^2_j
}$. Otherwise $\hat{r}_j  = 0$. More compactly,
\begin{equation}
\hat{w}_j = 
\begin{cases}
\hat{s}_j\hat{r}_j , & \text{if }\sqrt{n-1} \gamma_j >
\sqrt{ \alpha_j \beta_j - \gamma^2_j }\\
0 , & \text{else},
\end{cases}
\label{eq:zi_hat_pq}
\end{equation}
using \eqref{eq:scalars_phi},
\eqref{eq:scalars_abc} and \eqref{eq:scalars_r}. 

In summary, at sample $n$ the cyclic minimizer for problem
\eqref{eq:coeffmin_spice} consists of the iterative application of 
\eqref{eq:zi_hat_p} and \eqref{eq:zi_hat_pq}. As each element $\hat{w}_i$ is updated, the convex cost function in \eqref{eq:coeffmin_spice} decreases
monotonically by a general property of cyclic minimizers. Therefore by
repeating the updates $L$ times, the solution will converge to $\what{\coeff}$ in
\eqref{eq:coeffmin_spice} as $L$ increases. Note that all the variables in \eqref{eq:zi_hat_p} and \eqref{eq:zi_hat_pq}, are based on the variables \eqref{eq:variables_spice}.

\end{proof}

\end{appendices}

}

\small{
\bibliographystyle{ieeetr}
\bibliography{refs_covlearning}
}
\end{document}